\newcolumntype{L}[1]{>{\raggedright\arraybackslash}p{#1}} 
\newcolumntype{C}[1]{>{\centering\arraybackslash}p{#1}} 
\newcolumntype{R}[1]{>{\raggedleft\arraybackslash}p{#1}} 
\newcommand{\leadingzero}[1]{\ifnum #1<10 0\the#1\else\the#1\fi} 
\newtheorem{theorem}{Theorem}
\definecolor{darkgrey}{RGB}{28,28,28} 
\definecolor{mediumgrey}{RGB}{71,71,71} 
\definecolor{lightgrey}{RGB}{115,115,115} 
\definecolor{darkblue}{RGB}{79,101,140} 
\definecolor{mediumblue}{RGB}{112,144,200} 
\definecolor{lightblue}{RGB}{140,180,250} 
\definecolor{darkgreen}{RGB}{41,115,46} 
\definecolor{mediumgreen}{RGB}{65,180,73} 
\definecolor{lightgreen}{RGB}{90,250,101} 
\definecolor{mediumyellow}{RGB}{247,203,56} 
\definecolor{mediumorange}{RGB}{255,138,60} 
\definecolor{mediumred}{RGB}{219,73,55} 
\definecolor{mediumviolet}{RGB}{146,90,199} 
\definecolor{mediumturquoise}{RGB}{87,189,227} 
\DeclareMathOperator*{\arctantwo}{\text{atan}2}
\newcommand{\mContinuousGeZero}{{\mathbb{R}^{+}_{0}}}
\newcommand{\mContinuousPositive}{{\mathbb{R}^{+}}}
\newcommand{\mContinuous}{{\mathbb{R}}}
\newcommand{\mIndexTime}{t}
\newcommand{\mSetTime}{\mathbb{T}}
\newcommand{\mIndexBot}{r}
\newcommand{\mSetBots}{\mathcal{R}}
\newcommand{\mIndexBucket}{b}
\newcommand{\mSetBuckets}{\mathcal{B}}
\newcommand{\mIndexStation}{m}
\newcommand{\mSetStations}{\mathcal{M}}
\newcommand{\mSetIStations}{\mSetStations^{I}}
\newcommand{\mSetOStations}{\mSetStations^{O}}
\newcommand{\mIndexElevator}{l}
\newcommand{\mSetElevators}{\mathcal{L}}
\newcommand{\mIndexTier}{h}
\newcommand{\mSetTiers}{\mathcal{H}}
\newcommand{\mIndexWaypoint}{w}
\newcommand{\mSetWaypoints}{\mathcal{W}}
\newcommand{\mSetElevatorWaypoints}[1]{\mSetWaypoints_{#1}}
\newcommand{\mIndexEdge}{e}
\newcommand{\mSetEdges}{\mathcal{E}}
\newcommand{\mSetEdgesReverse}{\mSetEdges^{-1}}
\newcommand{\mSetGraph}{\mathcal{G}}
\newcommand{\mSetGraphReverse}{\mathcal{G}^{-1}}
\newcommand{\mSetBucketParkingSpots}{\mSetWaypoints^{P}}
\newcommand{\mSubTaskMove}{{move}}
\newcommand{\mSubTaskPickup}{{pickup}}
\newcommand{\mSubTaskSetdown}{{setdown}}
\newcommand{\mSubTaskPut}{{put}}
\newcommand{\mSubTaskGet}{{get}}
\newcommand{\mSetSearchSpace}{\mathcal{S}}
\newcommand{\mIndexState}{n}
\newcommand{\mSetBlockedNodes}{\mathcal{J}}
\newcommand{\mVarPosTier}[2]{p^H_{#1 #2}}
\newcommand{\mVarPosX}[2]{p^X_{#1 #2}}
\newcommand{\mVarPosY}[2]{p^Y_{#1 #2}}
\newcommand{\mVarPos}[2]{\left( \mVarPosTier{#1}{#2}, \mVarPosX{#1}{#2}, \mVarPosY{#1}{#2} \right)}
\newcommand{\mVarPosShort}[2]{p_{#1 #2}}
\newcommand{\mVarPosTierFixed}[1]{p^H_{#1}}
\newcommand{\mVarPosXFixed}[1]{p^X_{#1}}
\newcommand{\mVarPosYFixed}[1]{p^Y_{#1}}
\newcommand{\mVarPosFixed}[1]{\left( \mVarPosTierFixed{#1}, \mVarPosXFixed{#1}, \mVarPosYFixed{#1} \right)}
\newcommand{\mVarPosFixedShort}[1]{p_{#1}}
\newcommand{\mVarSpeed}[2]{v_{#1 #2}}
\newcommand{\mVarOrientation}[2]{o_{#1 #2}}
\newcommand{\mVarAngularVelocity}[2]{\omega_{#1 #2}}
\newcommand{\mVarPriority}[1]{p^O_{#1}}
\newcommand{\mVarResTab}{r^T}
\newcommand{\mVarTimeStampeOD}[2]{t_{#1 #2}}
\newcommand{\mValAngle}{\varphi}
\newcommand{\mValPath}[2]{\pi_{#1 #2}}
\newcommand{\mValPathIndexed}{\mValPath{\mIndexBot}{\mIndexTime}}
\newcommand{\mParWaypointInitial}[1]{V^I_{#1}}
\newcommand{\mParTimePickupSetdown}{T^{B}}
\newcommand{\mParTimeOStationHandleUnit}[1]{T^{O}_{#1}}
\newcommand{\mParTimeOStationHandleUnitIndexed}{\mParTimeOStationHandleUnit{\mIndexStation}}
\newcommand{\mParTimeIStationHandleUnit}[1]{T^{I}_{#1}}
\newcommand{\mParTimeIStationHandleUnitIndexed}{\mParTimeIStationHandleUnit{\mIndexStation}}
\newcommand{\mParTimeUseElevator}[3]{T^{L}_{#1 #2 #3}}
\newcommand{\mParTimeWaiting}{T^{W}}
\newcommand{\mParTimeRunning}{T^{R}}
\newcommand{\mParAcceleration}[1]{\overrightarrow{A}_{#1}}
\newcommand{\mParDeceleration}[1]{\overleftarrow{A}_{#1}}
\newcommand{\mParMaxVelocity}[1]{\overline{V}_{#1}}
\newcommand{\mParMaxAngularVelocity}[1]{\Omega_{#1}}
\newcommand{\mParLength}[1]{L^X_{#1}}
\newcommand{\mParWidth}[1]{L^Y_{#1}}
\newcommand{\mParRadius}[1]{L^R_{#1}}
\newcommand{\mParIteration}{I}
\newcommand{\mParPathPlanningTimeout}{T^{T}}
\newcommand{\mGetBucketOwner}[2]{f^O \left(#1,#2\right)}
\newcommand{\mGetBucketOwnerDefinition}{\mGetBucketOwner{\mIndexBucket}{\mIndexTime} : \mSetBuckets \times \mSetTime \rightarrow \mSetBots \cup \mSetBucketParkingSpots}
\newcommand{\mGetBotIsCarrying}[2]{f^C \left(#1,#2\right)}
\newcommand{\mGetFixedReservation}[1]{f^R \left(#1\right)}
\newcommand{\mGetFixedAndFinalReservation}[1]{f^{FR} \left(#1\right)}
\newcommand{\mRemoveFinalReservation}[2]{f^D \left(#1,#2\right)}
\newcommand{\mGetPosition}[1]{f^P \left(#1\right)}
\newcommand{\mSortAgentwithPriority}[2]{f^S \left(#1,#2\right)}
\newcommand{\mSortAgent}[1]{f^S \left(#1\right)}
\newcommand{\mGetDestinationIsChanged}[1]{f^{DC} \left(#1\right)}
\newcommand{\mGetHop}[2]{f^{HOP} \left(#1,#2\right)}
\newcommand{\mGetWaitsteps}[2]{f^W \left(#1,#2\right)}
\newcommand{\mAddReservation}[2]{f^{AR} \left(#1,#2\right)}
\newcommand{\mReorganizeReservationTable}[1]{f^{RR} \left(#1\right)}
\newcommand{\mAddFinalReservation}[1]{f^{AFR} \left(#1\right)}
\newcommand{\mAddWaitAction}[1]{f^{AW} \left(#1\right)}
\newcommand{\mUseEvasionStrategy}[1]{f^{E} \left(#1\right)}
\newcommand{\mGetRelationIsCircle}[2]{f^{RC} \left(#1,#2\right)}
\newcommand{\mGetLastAction}[1]{f^{GA} \left(#1\right)}
\newcommand{\mDetectCollision}[2]{f^{DC} \left(#1,#2\right)}
\newcommand{\mGetCollision}[2]{f^{GC} \left(#1,#2\right)}
\newcommand{\mGetCost}[2]{f^{C} \left(#1,#2\right)}
\newcommand{\mGetDistance}[3]{d^E \left(#1,#2,#3\right)}
\newcommand{\mGetDistanceTimeIndependent}[2]{d^E \left( #1,#2 \right)}
\newcommand{\mGetTimeRotation}[3]{t^R \left( #1, #2, #3 \right)}
\newcommand{\mGetTimeRotationDefinition}{\mGetTimeRotation{\mIndexBot}{\mValAngle}{\mParMaxAngularVelocity{\mIndexBot}} := \mValAngle \mParMaxAngularVelocity{\mIndexBot}^{-1}}
\newcommand{\mGetTimeRotationDefinitionAppendix}{\mIndexBot \in \mSetBots, \mValAngle, \mParMaxAngularVelocity{\mIndexBot} \in \mContinuousGeZero}
\newcommand{\mGetTimeDrive}[5]{t^D \left( #1, #2, #3, #4, #5 \right)}
\newcommand{\mGetTimeDriveDefinition}[1]{\mGetTimeDrive{\mIndexBot}{#1}{\mParAcceleration{\mIndexBot}}{\mParMaxVelocity{\mIndexBot}}{\mParDeceleration{\mIndexBot}} := 
\begin{cases}
\frac{\mParMaxVelocity{\mIndexBot}}{\mParAcceleration{\mIndexBot}} + \frac{#1 - \frac{\mParAcceleration{\mIndexBot}}{2} \left( \frac{\mParMaxVelocity{\mIndexBot}}{\mParAcceleration{\mIndexBot}} \right)^2 - \frac{\mParDeceleration{\mIndexBot}}{2} \left( \frac{\mParMaxVelocity{\mIndexBot}}{\mParDeceleration{\mIndexBot}} \right)^2}{\mParMaxVelocity{\mIndexBot}} + \frac{\mParMaxVelocity{\mIndexBot}}{\mParDeceleration{\mIndexBot}} & \text{if }#1 \ge \tilde{#1}\\[10pt]
\sqrt{\frac{#1}{\frac{\mParAcceleration{\mIndexBot}}{2} + \frac{\mParAcceleration{\mIndexBot}^2}{2\mParDeceleration{\mIndexBot}}}} + \sqrt{\frac{#1}{\frac{\mParDeceleration{\mIndexBot}}{2} + \frac{\mParDeceleration{\mIndexBot}^2}{2\mParAcceleration{\mIndexBot}}}} & \text{if }#1 < \tilde{#1}
\end{cases}}
\newcommand{\mGetTimeDriveDefinitionMinFullAccelerationDistance}[1]{\tilde{#1} := \frac{\mParAcceleration{\mIndexBot}}{2} \left( \frac{\mParMaxVelocity{\mIndexBot}}{\mParAcceleration{\mIndexBot}} \right)^2 + \frac{\mParDeceleration{\mIndexBot}}{2} \left( \frac{\mParMaxVelocity{\mIndexBot}}{\mParDeceleration{\mIndexBot}} \right)^2}
\newcommand{\mGetAStarCostsEstimatedByWHCA}[1]{h^{\text{\textit{WHCA}}^*} \left(#1\right)}
\newcommand{\mGetAStarCostsEstimatedByRRA}[1]{h^{RRA*} \left(#1\right)}
\newcommand{\mGetAStarCostsEstimatedTwoD}[1]{h^{2D} \left(#1\right)}
\newcommand{\mDecisionTaskAllocation}[2]{\alpha^{TA} (#1,#2)}
\begin{document}
	
	\title{Multi-Agent Path Finding \\
		with Kinematic Constraints for \\Robotic Mobile Fulfillment Systems}
	
	\author[1]{Marius Merschformann\footnote{marius.merschformann@uni-paderborn.de}}
	\author[2]{Lin Xie\footnote{lin.xie@leuphana.de}}
	\author[1]{Daniel Erdmann}
	\affil[1]{University of Paderborn, Paderborn, Germany}
	\affil[2]{Leuphana University of Lüneburg, Lüneburg, Germany}
	
	\maketitle
	
	\begin{abstract}
	This paper presents a collection of path planning algorithms for real-time movement of multiple robots across a Robotic Mobile Fulfillment System (RMFS). In such a system, robots are assigned to move storage units to pickers at working stations instead of requiring pickers to go to the storage area. Path planning algorithms aim to find paths for the robots to fulfill the requests without collisions or deadlocks. The robots are fully centralized controlled. The traditional path planning algorithms do not consider kinematic constraints of robots, such as maximum velocity limits, maximum acceleration and deceleration, and turning time. This work aims at developing new multi-agent path planning algorithms by considering kinematic constraints. Those algorithms are based on some exisiting path planning algorithms in literature, including WHCA*, FAR, BCP, OD\&ID and CBS. Moreover, those algorithms are integrated within a simulation tool to guide the robots from their starting points to their destinations during the storage and retrieval processes. Ten different layouts with a variety of numbers of robots, floors, pods, stations and the sizes of storage areas were considered in the simulation study. Performance metrics of throughput, path length and search time were monitored. Simulation results demonstrate the best algorithm based on each performance metric.
	\end{abstract}
	
	\section{Introduction}\label{sec:pp_intro}
	Due to the increasingly fast-paced economy, an efficient distribution center plays a crucial role in the supply chain. From the logistics perspective the main task is to turn homogeneous pallets into ready-to-ship packages that will be sent to the customer. Traditionally, as some customers' orders are received, pickers in different zones of a warehouse are sent to fetch the products, which are parts of several different customers' orders. After that, the products should be sorted and scanned. Once all parts of an order are complete, they are sent to packing workers to finish packaging. An extensive overview of manual order picking systems can be found in \cite{LeAnh.2006}. As shown in \cite{Tompkins.2010}, 50\% of pickers' time in these systems is spent on traveling around the warehouse. To ensure that the orders are shipped as fast as possible, automated storage and retrieval systems were introduced. An extensive literature review is provided by \cite{Roodbergen.2009}. While using these systems offers a high potential throughput they also face certain drawbacks such as high costs, long design cycles, inflexibility and lack of expandability as pointed out by \cite{Wurman.2008}. In order to improve or eliminate those disadvantages, automated Robotic Mobile Fulfillment Systems (RMFS), such as the Kiva System (\cite{Enright.2011}, nowadays Amazon Robotics), have been introduced as an alternative order picking system in recent years. Robots are sent to carry storage units, so-called ``pods'', from the inventory and bring them to human operators, who work at picking stations. At the stations, the items are packed according to the customers' orders. \cite{Wurman.2008} indicate that this system increases the productivity two to three times, compared with the classic manual order picking system. Moreover, the search and travel tasks for the pickers are eliminated. 

The applications in a similar RMFS have been paid more attention recently (see \cite{Merschformann.2017b} for a better overview). An efficient path-planning algorithm is important, since it aims at finding paths for robots to fulfill the requests without collisions or deadlocks, which is considered a major aspect of automation for storage and retrieval in an RMFS. The existing publications formulate path planing in such system as the classic Multi-Agent Pathfinding problem (MAPF) and this problem is solved with a bounded sub-optimal solver (see \cite{Cohen.2015} and \cite{Cohen.2017}). MAPF is a challenging problem with many applications in robotics, disaster rescue and video games (see \cite{Wang.2011}). However, MAPF solvers from AI typically do not work with real-world mobile robots, since they do not work for agents considering kinematic constraints, such as maximum velocity limits, maximum acceleration and deceleration, and turning times, in continuous environments. The authors of \cite{Hoenig.} suggested a postprocessing to create a schedule that can be executed by robots, based on the output of a MAPF solver. Differ to that, we introduce a novel mathematical formulation of MAPF, called Multi-Agent Pathfinding Problem for mobile Warehousing Robots (MAPFWR for short), that takes some of the kinematic constraints of real robots as well as the structure of the simulated warehouse environment into account during the path-planning phase. The existing MAPF-algorithms, including WHCA*,FAR, BCP, OD\&ID and CBS, are modifed for this purpose. There are two existing open-source frameworks to simulate an RMFS-System, the first framework, ``Alphabet Soup'' was published by \cite{Hazard.2006}, and the framework ``RawSim-O'' (\cite{Merschformann.2017b}). This latter one is based on the former one with extending cases of multiple floors and they consider different robot movement emulations. In this work, we test different MAPFWR-algorithms in the framwork ``RawSim-O'', since it has realistic robot movement emulation by considering the robot's turning time and adjusted acceleration or deceleration formulas. 

The first part of the paper gives a mathematical description of the MAPFWR problem and a literature review of the related problem MAPF. Next, we investigate the properties of the search space of MAPFWR. After that, we describe the path-planning algorithms that we implement. Finally, we present experimental results obtained by the simulation framework ``RawSim-O''.

	
	\section{Background} \label{sec:pp_Problem}
	This section first describes RMFS and the inherent decision problems in more detail. Next, the MAPFWR problem and its related problem, the MAPF problem, is defined. At last, a literature review of MAPF is presented.
\subsection{Robotic Mobile Fulfillment System}\label{subsec:pp_RMFS}

According to \cite{Gu.2007} the ``(...) basic requirements in warehouse operations are to receive Stock Keeping Units (SKUs) from suppliers, store the SKUs, receive orders from customers, retrieve SKUs and assemble them for shipment, and ship the completed orders to customers.'' Using this definition an RMFS provides the temporal storage of SKUs, and their retrieval to fulfill incoming customer orders. The approach for this requirement is to use pods (shelf-like storage units) to store the inventory and bring these to replenishment and pick stations as they are required for insert and extract transactions of items. The basic layout of one floor in a RMFS is illustrated in Fig. \ref{fig:pp_exampleprobleminstance}. The pods are located at the storage area in the middle of the layout. The robot carries a pod by following waypoints to the replenishment station on the left-hand side, where new items are inserted into the pod. After that, the pod is carried back to the storage area. Similar operations are done at the order picking station on the right-hand side, where items are picked to fulfill orders. The robots are guided by waypoints, which are typically connected by a grid-like graph (but not limited to it).

\begin{figure}[h] 
	\centering
	\includegraphics[width=0.66\textwidth]{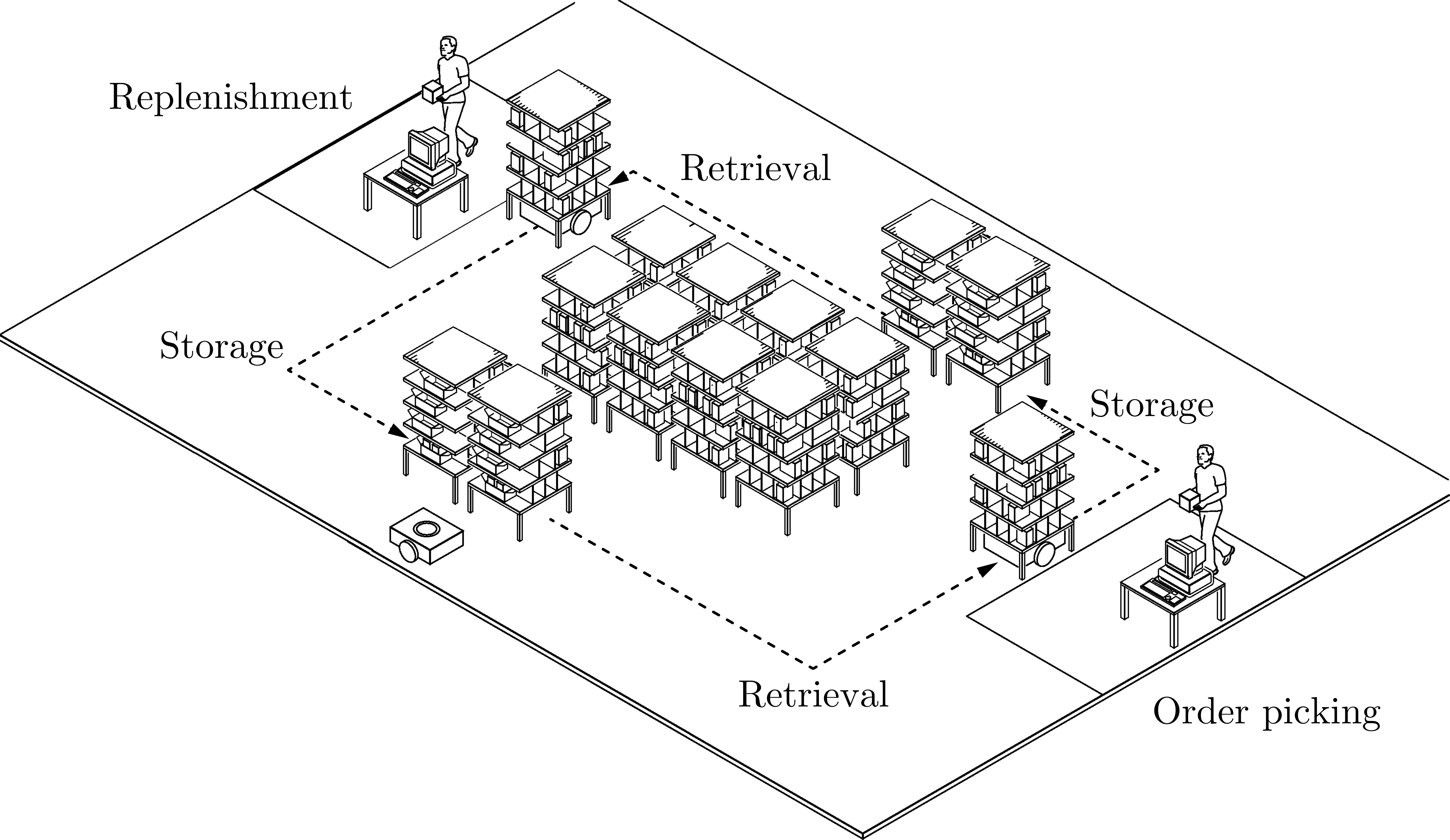}
	\caption{A basic layout of a RMFS based on the Kiva patent \cite{Hoffman.2013}}
	\label{fig:pp_exampleprobleminstance}
	\setlength{\belowcaptionskip}{-10ex}
\end{figure}

In the control of such a system many different decision problems need to be overcome to sustain an overall efficiency (see \cite{Enright.2011}). Limiting the view only at the core of the operational decision problems we propose the abstract structure given by Figure \ref{fig:pp_problemcomponentsoverview} to support a better understanding of the problem interactions in a RMFS. As new item bundles shall be stored in the inventory a \textit{replenishment assignment} controller needs to choose from which replenishment station these shall be inserted and a \textit{bundle storage assignment} controller needs to determine a suitable pod to store it on. For a new customer order first only a pick station to process it is chosen by an \textit{order assignment} controller. The former process leads to insertion requests, while the latter leads to extraction requests. These can now be combined to tasks that shall be executed by a robot. While insert tasks can simply be determined by combining all requests with matching station and pod to a task, we need to select a pod to fulfill the extraction requests using a \textit{pick pod selection} controller first to create extract tasks. The decision about the right pod to bring to a pick station is postponed, because it allows us to exploit more information that becomes available over time. In addition to the insertion and extraction requests that resemble the systems main purpose, park requests are the result of pods that need to be brought back to the inventory. For these a \textit{pod storage assignment} controller determines a suitable storage location. The combination of requests to tasks and the allocation of them to robots is done by a \textit{task allocation} controller. Most of the mentioned tasks require the robot to go from one location to another, which leads to trips from one waypoint in the system to another. For these trips a \textit{path planning} controller is needed to ensure time-efficient, collision- and deadlock-free paths. As the last is the main focus of this work, fixed controllers are used for all other problem components to enable a working system while allowing a fair comparison of the path planning methods (see Section \ref{sec:pp_Simulation}). The existing researches of each decision problem can be found in \cite{Boysen.2017} and \cite{Merschformann.2017b}. We concentrate in this work only the path planning algorithms.

\begin{figure}[h]
	\centering
	\includegraphics[width=0.55\textwidth]{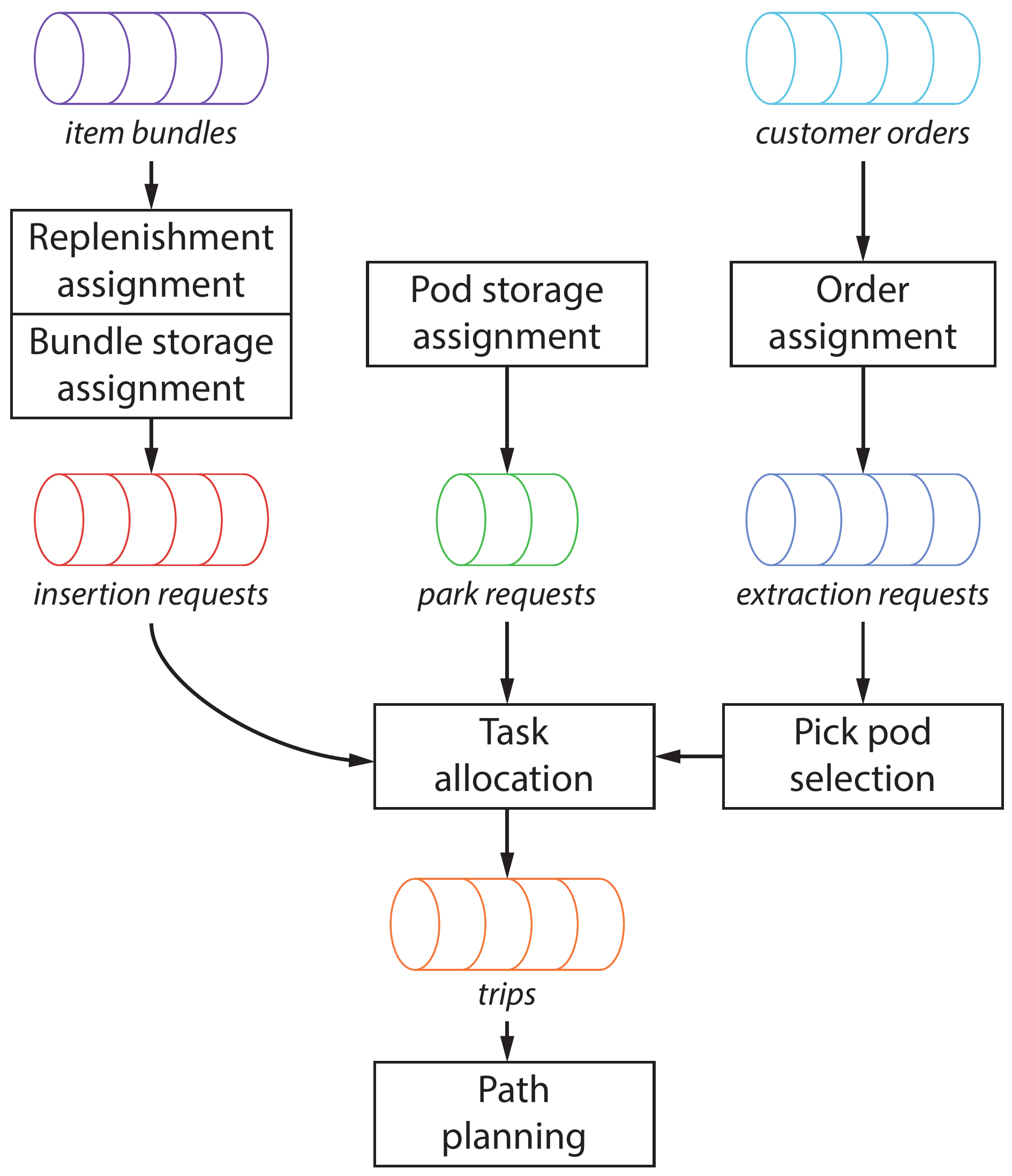}
	\caption{Overview of the core decision problems of a RMFS at operational scope}
	\label{fig:pp_problemcomponentsoverview}
	\setlength{\belowcaptionskip}{-10ex}
\end{figure}

\subsection{MAPFWR}
As mentioned in Section 1, we formulate path planning as the MAPFWR problem, which will be described in this section.
\subsubsection{Problem input}
The main resources are given by the sets of pods $\mIndexBucket \in \mSetBuckets$ and stations $\mIndexStation \in \mSetStations := \mSetIStations \cup \mSetOStations$ (replenishment- and pick-stations). The agents in the described system are mobile robots $\mIndexBot \in \mSetBots$. Therefore, robots and agents are used synonymously in this work. These robots use a multi-layer planar graph $\mSetGraph$ composed of waypoints $\mIndexWaypoint \in \mSetWaypoints$ as its vertices and edges $\mIndexEdge \in \mSetEdges$ connecting them. The edges of the transpose graph $\mSetGraphReverse$ are denoted by $\mSetEdgesReverse$. 
Every layer of the graph represents one tier $\mIndexTier \in \mSetTiers$ of an instance. All positions across the tiers are denoted by $\mVarPos{i}{\mIndexTime} \in \bigcup\limits_{\mIndexTier \in \mSetTiers} \left\{ \mIndexTier \right\} \times \left[ 0,\mParLength{\mIndexTier} \right] \times \left[ 0,\mParWidth{\mIndexTier} \right]$ (i.e. the current tier the robot is on, the x- and the y-coordinate) for movable entities $i\in \mSetBots \cup \mSetBuckets$ (robots and pods) depending on the current time $\mIndexTime \in \mSetTime$ and bounded by the length $\mParLength{\mIndexTier}$ and width $\mParWidth{\mIndexTier}$ of the respective tier $\mIndexTier$. The time-independent position $\mVarPosFixed{i}$ of immovable entities $i\in \mSetWaypoints \cup \mSetStations$ is defined analogously. At this, the time-horizon is continuous ($\mSetTime := \mContinuousGeZero$). At the starting time $\mIndexTime_0$ every robot, pod and station is located at an initial position $\mParWaypointInitial{i} \in \mSetWaypoints$. The time-dependent distance between two entities $i$ and $j$ is defined as the euclidean norm and denoted by $\mGetDistance{i}{j}{\mIndexTime}$ (analogously the time-independent distance is $\mGetDistanceTimeIndependent{i}{j}$). Additionally, the speed of the robots is expressed by $\mVarSpeed{\mIndexBot}{\mIndexTime} \in \mContinuousGeZero$ and orientation by $\mVarOrientation{\mIndexBot}{\mIndexTime} \in \left\{ 0, 2\pi \right\}$. The shape of the robot is abstracted by a circle with radius $\mParRadius{\mIndexBot} \in \mContinuousPositive$. The angular velocity the robot is turning by at time $\mIndexTime$ is indicated by $\mVarAngularVelocity{\mIndexBot}{\mIndexTime} \in \mContinuous$. Furthermore, every robot has a constant acceleration rate $\mParAcceleration{\mIndexBot} \in \mContinuousPositive$, deceleration rate $\mParDeceleration{\mIndexBot} \in \mContinuousPositive$ and maximal velocity $\mParMaxVelocity{\mIndexBot} \in \mContinuousPositive$. The robot can change its orientation clockwise and counterclockwise by the maximal angular velocity of $\mParMaxAngularVelocity{\mIndexBot} \in \mContinuousPositive$. This work abstracts from the rotational momentum.
 
The multiple tiers of an instance can be connected by elevators $\mIndexElevator \in \mSetElevators$, which are able to transport a robot between two waypoints $\mIndexWaypoint$ and $\mIndexWaypoint'$ in the constant time $\mParTimeUseElevator{\mIndexElevator}{\mIndexWaypoint}{\mIndexWaypoint'}$, which depends on the height between the two waypoints to cover and the elevator's speed. During this time all waypoints associated with the elevator are blocked. When a robot stops at a station the time for handling one item is assumed to be constant. We distinguish the time for handling one physical item to fulfill an order at a pick station $\mParTimeOStationHandleUnit{\mIndexStation} \in \mContinuousGeZero$ with $\mIndexStation \in \mSetOStations$ and the time to handle an incoming bundle of items at a replenishment station $\mParTimeIStationHandleUnit{\mIndexStation} \in \mContinuousGeZero$ with $\mIndexStation \in \mSetIStations$. The time for picking up and setting down a pod is given by the constant value $\mParTimePickupSetdown$. Analogous to the robot a pod's shape is abstracted by a circle with the radius $\mParRadius{\mIndexBucket}$. A pod not currently carried by a robot can be stored at one of the parking positions $\mIndexWaypoint \in \mSetBucketParkingSpots \subset \mSetWaypoints$. The choice of the parking position is determined by another planning component that is not discussed any further in this work.

The length of the edges of the graph is set a priori and has a lower bound determined by the maximal radii of the robots and pods (see Eq. \ref{eq:pp_edgeLength}).
\begin{equation} \label{eq:pp_edgeLength}
\forall \left( \mIndexWaypoint_1, \mIndexWaypoint_2 \right) \in \mSetEdges: \mGetDistanceTimeIndependent{\mIndexWaypoint_1}{\mIndexWaypoint_2} \ge \max_{i\in \mSetBots \cup \mSetBuckets} \max_{i'\in \mSetBots \cup \mSetBuckets \setminus \left\{ i \right\}} \mParRadius{i} + \mParRadius{i'}
\end{equation}
With this requirement, all robots and pods can be arbitrarily distributed among the waypoints.

During runtime a robot can be asked to bring a pod to a station (to insert an item or extract it), to park a pod or to rest at a certain waypoint (these are called as tasks). Every one of these tasks can be decomposed into subtasks as shown in Tab. \ref{tab:pp_task-decomposition}. Note that the first two subtasks of an insert or extract task can be skipped, if the robot already carries the right pod at the time the task is assigned to it. All subtasks, except for the move subtask, block the robot at its current position for the constant period of time described above. Only the move subtask has to be immediately considered here as it requires the generation of a path respecting the movement of other robots in the system. This path starts at the current location of the robot and ends at the destination waypoint specified by the move subtask. 

\begin{table}[h]
	\caption{The task types and their subtask types}
	\begin{tabular}{ll}\label{tab:pp_task-decomposition}
		Task type & Subtask types \\
		\hline
		\ Insert & $ \left( \mSubTaskMove, \mSubTaskPickup, \mSubTaskMove, \mSubTaskPut \right)  $ \\
		\ Extract & $ \left( \mSubTaskMove, \mSubTaskPickup, \mSubTaskMove, \mSubTaskGet \right) $ \\
		\ Park & $ \left( \mSubTaskMove, \mSubTaskSetdown \right) $ \\
		\ Rest & $ \left( \mSubTaskMove \right) $ \\
	\end{tabular}
\end{table}

\subsubsection{Constraints during path generation}

In the following, constraints that have to be adhered during path generation are described. At first, robots always have to be located either on a node or an edge (see Eq. \ref{eq:pp_robot-enforce-path-or-edge}).
\begin{align}\label{eq:pp_robot-enforce-path-or-edge}
\forall \mIndexBot \in \mSetBots, \mIndexTime \in \mSetTime: \exists \mIndexWaypoint \in \left\{ \mSetWaypoints \mid \mVarPosTierFixed{\mIndexWaypoint} = \mVarPosTier{\mIndexBot}{\mIndexTime} \right\}: \mVarPosFixedShort{\mIndexWaypoint} = \mVarPosShort{\mIndexBot}{\mIndexTime} \vee & \nonumber\\
\exists \left( \mIndexWaypoint_1, \mIndexWaypoint_2 \right) \in \left\{ \mSetEdges \mid \mVarPosTierFixed{\mIndexWaypoint_1} = \mVarPosTierFixed{\mIndexWaypoint_2} = \mVarPosTier{\mIndexBot}{\mIndexTime} \right\}:\nonumber\\ \mGetDistance{\mIndexWaypoint_1}{\mIndexBot}{\mIndexTime} + \mGetDistance{\mIndexBot}{\mIndexWaypoint_2}{\mIndexTime} = \mGetDistanceTimeIndependent{\mIndexWaypoint_1}{\mIndexWaypoint_2} &
\end{align}
Next, the robots cannot overlap with each other at any time (see Eq. \ref{eq:pp_robot-enforce-no-collision}) as this would immediately result in a collision. This is analogously defined for the pods in Eq. \ref{eq:pp_bucket-enforce-no-collision}.
\begin{equation}\label{eq:pp_robot-enforce-no-collision}
\forall \mIndexBot_1, \mIndexBot_2 \in \mSetBots, \mIndexTime \in \mSetTime: \mVarPosTier{\mIndexBot_1}{\mIndexTime} = \mVarPosTier{\mIndexBot_2}{\mIndexTime} \implies \mGetDistance{\mIndexBot_1}{\mIndexBot_2}{\mIndexTime} \ge \mParRadius{\mIndexBot_1} + \mParRadius{\mIndexBot_2}
\end{equation}
\begin{equation}\label{eq:pp_bucket-enforce-no-collision}
\forall \mIndexBucket_1, \mIndexBucket_2 \in \mSetBuckets, \mIndexTime \in \mSetTime: \mVarPosTier{\mIndexBucket_1}{\mIndexTime} = \mVarPosTier{\mIndexBucket_2}{\mIndexTime} \implies \mGetDistance{\mIndexBucket_1}{\mIndexBucket_2}{\mIndexTime} \ge \mParRadius{\mIndexBucket_1} + \mParRadius{\mIndexBucket_2}
\end{equation}
The function $\mGetBucketOwnerDefinition$ determines the current owner of a pod, which can be either a robot carrying it or a storage location it is stored at. For this we also require the injectivity of the function, because a pod can only be carried or stored by one entity (see Eq. \ref{eq:pp_bucket-enforce-distinct-owner}).
\begin{equation}\label{eq:pp_bucket-enforce-distinct-owner}
\forall \mIndexBucket_1, \mIndexBucket_2 \in \mSetBuckets, \mIndexTime \in \mSetTime: \mGetBucketOwner{\mIndexBucket_1}{\mIndexTime} = \mGetBucketOwner{\mIndexBucket_2}{\mIndexTime} \implies \mIndexBucket_1 = \mIndexBucket_2
\end{equation}
Using this constraint, the position of the pod can be inferred by the position of either the robot or the storage location (see Eq. \ref{eq:pp_bucket-position-inference}).
\begin{align}\label{eq:pp_bucket-position-inference}
\forall \mIndexBucket \in \mSetBuckets, \mIndexTime \in \mSetTime: \left( \exists \mIndexBot \in \mSetBots: \mGetBucketOwner{\mIndexBucket}{\mIndexTime} = \mIndexBot \implies \mVarPosShort{\mIndexBucket}{\mIndexTime} = \mVarPosShort{\mIndexBot}{\mIndexTime} \right) \nonumber\\
\vee \left( \exists \mIndexWaypoint \in \mSetBucketParkingSpots: \mGetBucketOwner{\mIndexBucket}{\mIndexTime} = \mIndexWaypoint \implies \mVarPosShort{\mIndexBucket}{\mIndexTime} = \mVarPosShort{\mIndexWaypoint}{\mIndexTime} \right)
\end{align}
This also infers that a robot carrying a pod cannot pass waypoints at which a pod is stored. Conversely, a robot not carrying one can move beneath pods, thus, using waypoints at which a pod is stored. Furthermore, the turning of a robot is only allowed on a waypoint while it is not moving (see Eq. \ref{eq:pp_bot-only-turn-on-waypoints}). Hence, a robot moving along an edge has to be oriented towards the same direction as the edge (see Eq. \ref{eq:pp_bot-face-edge-orientation-while-moving}). These constraints limit the movement of the robots to the graph similarly to MAPF, but with the difference that a robot can be located between two waypoints at a time $\mIndexTime$.
\begin{align}\label{eq:pp_bot-only-turn-on-waypoints}
\forall \mIndexBot \in \mSetBots, \mIndexTime \in \mSetTime: \mVarAngularVelocity{\mIndexBot}{\mIndexTime} \neq 0 \implies \mVarSpeed{\mIndexBot}{\mIndexTime} = 0 \wedge \exists \mIndexWaypoint \in \mSetBucketParkingSpots: \mVarPosFixedShort{\mIndexWaypoint} = \mVarPosShort{\mIndexBot}{\mIndexTime}
\end{align}
\begin{align}\label{eq:pp_bot-face-edge-orientation-while-moving}
\forall \mIndexBot \in \mSetBots, \mIndexTime \in \mSetTime,  \left( \mIndexWaypoint_1, \mIndexWaypoint_2 \right) \in \left\{ \mSetEdges \mid \mVarPosTierFixed{\mIndexWaypoint_1} = \mVarPosTierFixed{\mIndexWaypoint_2} = \mVarPosTier{\mIndexBot}{\mIndexTime} \right\}:\nonumber\\ \mVarPosShort{\mIndexBot}{\mIndexTime} \neq \mVarPosFixedShort{\mIndexWaypoint_1} \wedge \mVarPosShort{\mIndexBot}{\mIndexTime} \neq \mVarPosFixedShort{\mIndexWaypoint_2} \wedge \mGetDistance{\mIndexWaypoint_1}{\mIndexBot}{\mIndexTime} + \mGetDistance{\mIndexBot}{\mIndexWaypoint_2}{\mIndexTime} = \mGetDistanceTimeIndependent{\mIndexWaypoint_1}{\mIndexWaypoint_2} \nonumber\\ \implies \mVarOrientation{\mIndexBot}{\mIndexTime} = \arctantwo \left( \mVarPosYFixed{\mIndexWaypoint_1} - \mVarPosY{\mIndexBot}{\mIndexTime}, \mVarPosXFixed{\mIndexWaypoint_1} - \mVarPosX{\mIndexBot}{\mIndexTime} \right) \nonumber\\ \vee \mVarOrientation{\mIndexBot}{\mIndexTime} = \arctantwo \left( \mVarPosYFixed{\mIndexWaypoint_2} - \mVarPosY{\mIndexBot}{\mIndexTime}, \mVarPosXFixed{\mIndexWaypoint_2} - \mVarPosX{\mIndexBot}{\mIndexTime} \right)
\end{align}
Under these principles the movement of a robot can be described as a sequence of an optional rotation, an acceleration, an optional top-speed and a deceleration phase. The time it takes the robot to rotate depends on the rotation angle $\mValAngle$ and is defined in Eq. \ref{eq:pp_rotation-time}.
\begin{equation}\label{eq:pp_rotation-time}
\mGetTimeRotationDefinition \text{ with } \mGetTimeRotationDefinitionAppendix
\end{equation}
Respecting the robots constant acceleration rate and the following equation expresses the time that is consumed while a robot is moving straight (according to \cite{Richard.2008}). At this, two cases have to be distinguished. In the first case the driving distance $d$ is sufficiently long to fully accelerate the robot. This is the case, if the following is true (see Eq. \ref{eq:pp_robot-min-distance-to-fully-accelerate}).
\begin{equation}\label{eq:pp_robot-min-distance-to-fully-accelerate}
d \ge \tilde{d} \text{ with } \mGetTimeDriveDefinitionMinFullAccelerationDistance{d}
\end{equation}
This leads to the time being a sum of the acceleration phase, full speed phase and deceleration phase time. If the distance is too short to fully accelerate, the time only depends on the acceleration and deceleration phases. Equation \ref{eq:pp_robot-drive-time} comprises both cases.
\begin{equation}\label{eq:pp_robot-drive-time}
\mGetTimeDriveDefinition{d}
\end{equation}
Every replenishment station, pick station and elevator can only be used by one robot at a time. For storing bundles at an replenishment station $\mIndexStation$ the robot is blocked for a constant time $\mParTimeIStationHandleUnit{\mIndexStation}$ during the execution of subtask $\mSubTaskPut$ at the station's position. Analogously, the robot waits a constant time $\mParTimeOStationHandleUnit{\mIndexStation}$ during the subtask $\mSubTaskGet$ at pick stations. For traveling with elevator $\mIndexElevator$ from $\mIndexWaypoint$ to $\mIndexWaypoint'$ the robot is blocked for $\mParTimeUseElevator{\mIndexElevator}{\mIndexWaypoint}{\mIndexWaypoint'}$ time-units at $\mIndexWaypoint$ with $\mIndexWaypoint$ and $\mIndexWaypoint'$ contained in $\mSetElevatorWaypoints{\mIndexElevator}$. During execution of the system every robot that has just finished a task requests a new one. The decision of which task the robot has to execute next is determined by another component and given by $\mDecisionTaskAllocation{\mIndexBot}{\mIndexTime}$ with all possible tasks denoted by Tab. \ref{tab:pp_task-decomposition}. For all robots with the subtask $\mSubTaskMove$, a path must be planned. A path $\mValPathIndexed$ of robot $\mIndexBot$ consists of a sequence of triples $\left( \mIndexWaypoint, stop, wait \right)$. The triple describes the actions of the robot with $\mIndexWaypoint \in \mSetWaypoints$ as the waypoint to go to next, $stop \in \left\{ true, false \right\}$ denoting whether the robot stops at the waypoint and $wait \in \mContinuousGeZero$ determining the time the robot waits at the waypoint if $stop = true$ before executing the next action. Let $\mValPathIndexed' \subseteq \mValPathIndexed$ be a subpath for which the first and last triples denote $stop = true$ and for all others $stop = false$. Let $\left( \mIndexWaypoint_1, \dots, \mIndexWaypoint_n \right)$ be the node sequence of the subpath $\mValPathIndexed'$, then two subsequent nodes have to be connected by an edge and all edges have to be parallel to each other. Equations \ref{eq:pp_drive-path-connected} and \ref{eq:pp_drive-path-straight} comprise this requirement.
\begin{equation}\label{eq:pp_drive-path-connected}
\forall i=0, \dots, n-1: \left( \mIndexWaypoint_i, \mIndexWaypoint_{i+1} \right) \in \mSetEdges
\end{equation}
\begin{equation}\label{eq:pp_drive-path-straight}
\begin{split}
\forall i=0, \dots, n-2 : \arctantwo\left( \mVarPosX{\mIndexWaypoint_{i}}{\mIndexTime} - \mVarPosY{\mIndexWaypoint_{i+1}}{\mIndexTime}, \mVarPosY{\mIndexWaypoint_{i}}{\mIndexTime} - \mVarPosX{\mIndexWaypoint_{i+1}}{\mIndexTime} \right)\\
= \arctantwo\left( \mVarPosY{\mIndexWaypoint_{i+1}}{\mIndexTime} - \mVarPosY{\mIndexWaypoint_{i+2}}{\mIndexTime}, \mVarPosX{\mIndexWaypoint_{i+1}}{\mIndexTime} - \mVarPosX{\mIndexWaypoint_{i+2}}{\mIndexTime} \right)
\end{split}
\end{equation}
Table \ref{tab:pp_subtask-transition-conditions} shows the different conditions that need to be valid for completing a subtask for robot $\mIndexBot$ at time $\mIndexTime$. A change of the value for function $\mGetBucketOwner{\mIndexBucket}{\mIndexTime}$ is thereby only possible at time $\mIndexTime = \mIndexTime'$ at which either the subtask $\mSubTaskPickup$ or $\mSubTaskSetdown$ is completed. The most relevant subtask is depicted by $\mSubTaskMove$, because it depicts the operation of going from one waypoint to another. Furthermore, the subtask $\mSubTaskPickup$ lifts a pod such that the robot can carry it. The reverse operation of setting down a pod is depicted by $\mSubTaskSetdown$. At last, the subtask $\mSubTaskPut$ stores a number of item bundles in a pod while the subtask $\mSubTaskGet$ picks a number of items from a pod.
\begin{table}[h]
	\caption{Conditions for switching from the respective subtask to a succeeding one.}
	\begin{tabular}{l|l}\label{tab:pp_subtask-transition-conditions}
		Subtask & Condition \\
		\hline
		\ $\mSubTaskMove$ & $\mVarPosShort{\mIndexBot}{\mIndexTime'} = \mVarPosFixedShort{\mIndexWaypoint} \wedge \mVarSpeed{\mIndexBot}{\mIndexTime'} = 0 $ \\
		\ $\mSubTaskPickup$ & $\exists \mIndexWaypoint \in \mSetBucketParkingSpots \, \forall \mIndexTime \in \left[ \mIndexTime' - \mParTimePickupSetdown, \mIndexTime' \right]: \mGetBucketOwner{\mIndexBucket}{\mIndexTime} = \mIndexWaypoint \wedge \mVarPosShort{\mIndexBot}{\mIndexTime} = \mVarPosShort{\mIndexBucket}{\mIndexTime} = \mVarPosFixedShort{\mIndexWaypoint} $ \\
		\ $\mSubTaskSetdown$ & $\exists \mIndexBucket \in \mSetBuckets \, \forall \mIndexTime \in \left[ \mIndexTime' - \mParTimePickupSetdown, \mIndexTime' \right]: \mGetBucketOwner{\mIndexBucket}{\mIndexTime} = \mIndexBot \wedge \mVarPosShort{\mIndexBot}{\mIndexTime} = \mVarPosShort{\mIndexBucket}{\mIndexTime} = \mVarPosFixedShort{\mIndexWaypoint} $ \\
		\ $\mSubTaskPut$ & $\forall \mIndexTime \in \left[ \mIndexTime' - \mParTimeOStationHandleUnit{\mIndexStation}, \mIndexTime' \right]: \mGetBucketOwner{\mIndexBucket}{\mIndexTime} = \mIndexBot \wedge \mVarPosShort{\mIndexBot}{\mIndexTime} = \mVarPosShort{\mIndexBucket}{\mIndexTime} = \mVarPosFixedShort{\mIndexStation}$ \\
		\ $\mSubTaskGet$ & $\forall \mIndexTime \in \left[ \mIndexTime' - \mParTimeIStationHandleUnit{\mIndexStation}, \mIndexTime' \right]: \mGetBucketOwner{\mIndexBucket}{\mIndexTime} = \mIndexBot \wedge \mVarPosShort{\mIndexBot}{\mIndexTime} = \mVarPosShort{\mIndexBucket}{\mIndexTime} = \mVarPosFixedShort{\mIndexStation}$ \\
	\end{tabular}
\end{table}


\subsubsection{Objectives}
One of the main objectives for an RMFS is a high throughput of customer orders. Since we are also considering replenishment operations, we combine the number of picked items with the number of stored bundles to our main metric of handled units overall. This is supported by the robots bringing suitable pods to the stations to complete the aforementioned insert and extract requests. Hence, more time-efficient paths for the robots increase the system's throughput by decreasing the waiting times at the stations. Note that we assume that the subtasks $\mSubTaskPickup$, $\mSubTaskSetdown$, $\mSubTaskGet$ and $\mSubTaskPut$ underlie constant times, therefore the minimization can be achieved by the time-efficient paths, except for possible queuing time at stations. It is similar to the function \textit{sum-of-cost} in \cite{Felner.2004}, where driving and waiting times are aggregated for the finding paths. Therefore, we are also especially interested in the average time for completing a trip. At last, the average distance covered per trip is important when considering wear of the robots and as an indicator for energy consumption. This is also similar to the fuel function in \cite{Felner.2004}. 

\subsection{Related work} \label{subsec:pp_relatedProblem}
The similar problem MAPF is widely discussed in the literature, with applications from video games (see \cite{Krontiris.2013}) to exploration of three-dimensional environments with quadrotor drones (see \cite{Turpin.2014}). In this problem, a set of agents is given, and each of them has its start and end positions. It aims at finding the path for each agent without causing collisions. In many cases minimizing the sum of the timesteps that are required for every agent to reach its goal is also considered as an additional goal. \cite{Sharon.2015} describe this system as generally consisting of a set of agents, each of which has a unique start state and a unique goal state. Moreover, the time is discretized into timepoints and the time for rotation of each agent is set to zero. Also, it is assumed that the time for crossing each arc is constant. These differ to our problem, that means our problem MAPFWR considers continious timepoints and each agent requires time to rotate and the time for crossing each arc might be differed for each agent. Therefore, MAPF can be considered as a special case of our problem. 

In the literature, there are a number of sub-optimal/optimal MAPF solvers (see Sections \ref{subsubsec:pp_suboptimalAlgo} and \ref{subsubsec:pp_optimalAlgo}). In this paper, we modify some of them to solve MAPFWR, especially search-based optimal/sub-optimal MAPF solvers, since they are ususally designed for the sum-of-costs objective function and they support the massive search. The details of the modifications can be found in Section 4. The state-of-the-art MAPF path planning algorithms can be found in \cite{Sharon.2015}.

\subsubsection{Sub-optimal algorithms}\label{subsubsec:pp_suboptimalAlgo}
Finding an optimal solution for the MAPF problem is proven to be NP-hard (see \cite{Yu.}). The standard admissible algorithm for solving the MAPF problem is the A*-algorithm, which uses the following problem representation. A state is a $n$-tuple of grid locations, one for each of n agents. The standard algorithm considers the moves of all agents simultaneously at a timestep, so each state potentially has $b^n$ legal operators ($b$ is the number of possible actions).  With the increasing number of agents the state space grows exponentially, therefore, there are many sub-optimal solvers in the literature for solving this problem more quickly. 
\paragraph{Search-based Solvers} These solvers are usually designed for the sum-of-cost objective function. Hierarchical Cooperative A* (HCA*), introduced by \cite{Silver.2005}, plans the agents one at a time according to some predefined orders. A global space time reservation table is used to reserve the path for each agent. That ensures the path for actual agent do not include collisions with the path of previous agent. Windowed-HCA* (WHCA*) in \cite{Silver.2005} enhances HCA* to apply the reservation table within a limit time window. There are some other enhancements of HCA* shown in \cite{Bnaya.2014}, \cite{Sturtevant.} and \cite{Ryan.2008}. \cite{Geramifard.2006} introduce a simple meta-algorithm called Biased Cost Pathfinding (BCP), which assigns a priority to each agent. For each agent, A* is used to find the optimal path without considering collisions. This algorithm extends $h(n)$ with an additional virtual cost to control collisions. 
\paragraph{Rule-based Solvers} These solvers include specific agent-movement rules for different scenarios, but they are usually do not include massive search. Therefore, they are not adopted in this paper. These solvers include TASS (\cite{Khorshid.}), Push-and-Swap (\cite{Luna.2011} and \cite{Sajid.}), Push-and-Rotate (\cite{Wilde.}), BIBOX (\cite{Surynek.}) and diBOX (\cite{Botea.}). 
\paragraph{Hybrid Solvers} These solvers include both movement rules and massive search. For example, Flow Annotation Replanning (FAR), introduced by \cite{Wang.2008} and \cite{Wang.2011}, adds a flow restriction to limit the movement along a given row or column to only one direction, which avoids head-to-head collisions. The shortest path for each agent is implemented by the A*-algorithm independently, and a heuristic procedure is used to repair plans locally, if deadlocks occur. This method shows similar results to WHCA*, but with lower memory capacities and shorter runtime. A similar approach was proposed in \cite{Jansen.}.

\subsubsection{Optimal algorithms}\label{subsubsec:pp_optimalAlgo}
There are some optimal solvers for MAPF discussed in the literature, such as the algorithm of \cite{Standley.2010} and Conflict Based Search (CBS) introduced by \cite{Sharon.2015}. It is still possible to find the optimal solutions for real-world problems within an acceptable time: for example, if the paths found by an A*-based algorithm do not contain any collisions. Moreover, it is easier to find optimal solutions if the number of agents is small compared to the size of the graph.
\paragraph{Reduction-based Solvers} Some optimal solvers in the literature try to reduce MAPF to standard known problems, such as SAT (\cite{Gorbenko.2012} and \cite{Surynek.2012}), ASP (\cite{Erdem.}) and CSP (\cite{Ryan.}), since they have existing high-quality solvers. However, they were not designed to solve the sum-of-cost objective function; therefore, it is not easy to modify them for our purpose.  Still, A first reduction-based SAT solver for sum-of-costs objective function was introduced in \cite{Surynek.2016}.
\paragraph{Search-based Solvers} Some search-based solvers are based on A* algorithm and try to overcome the drawbacks of A*, such as large open-list of successors and large number of neighbors during branching. In order to reduce the number of nodes, which are generated but never expanded, Standley introduced an operator decomposition (OD). An intermediate node is introduced to ensure only the moves of a single agent are considered when a regular A* node is expanded. The authors in \cite{Goldenberg.2014} introduce another method, called a priori domain knowledge to reduce the number of nodes. Moverover, Standley introduced the Independence Detection (ID) framework to reduce the effective number of agents. The idea behind this is to detect independent groups of agents. Two groups of agents are independent, if an optimal solution can be found by a low-level solver (such as OD) for each group and no conflict occurs between them. Initially, each agent is a group and an optimal solution is found for it. The agents, which have conflicts with each other, are merged into one group and new solutions are found for them, and so on. This method ends if one solution without collision is found, or all agents belonging to one group. This algorithm is called OD\&ID in the rest of this paper. Another optimal MAPF solver based on A* is M* (see \cite{Wagner.2015}). There are some search-based solvers are not based on A* algorithm, such as Conflict Based Search (CBS) introduced by \cite{Sharon.2015} and Increasing Cost Treee Search (ICTS) by \cite{Sharon.}. The idea of CBS is similar to that of the branch-and-bound algorithm (see \cite{Land.1960}), where MAPF is decomposed into a large number of constrained single-agent pathfinding problems. It aims at finding a minimum-cost constraint tree without collisions. This algorithm works on two levels, namely high level and low level. At the high level, conflicts are found and constraints are added, while an optimal path is found for each agent at the low level, which is consistent with the new constraints. The low-level and high-level searches are best-first searches; however, \cite{Barer.2014} introduced Enhanced Conflict-Based Search (ECBS), which uses focal searches (see \cite{Pearl.1984}) for both levels. This algorithm is sub-optimal, but with shorter runtime compared with CBS, since the focal search considers a subset of the best search and expands a node with $f(n) \leq \epsilon f_{min}$. The parameter $\epsilon$ is defined by the user. Cohen et al. \cite{Cohen.2015} combine \textit{highways} with ECBS for solving the Kiva system. According to the authors, the combination has been demonstrated to decrease computational runtime and costs compared with ECBS; moreover, ECBS alone is slow for a large number of agents in the Kiva system. ICTS works with discrete increasing cost tree at high level, while at the low level a non-conflicing complete solution is found. We do not adopt ICTS since MAPFWR works in continuous enviornments. More about search-based optimal solvers for MAPF can be found in \cite{Felner.}.

	\section{Search space} \label{sec:pp_SearchSpace}
	In this section we discuss the properties of search space $\mSetSearchSpace$ as a state space. Note that $\mSetSearchSpace$ is different from the graph $\mSetGraph$. The state in search space $\mSetSearchSpace$ is noted as $\mIndexState$, while the node in Graph $\mSetGraph$ is noted as $\mIndexWaypoint$. As described in Section \ref{subsec:pp_relatedProblem} for MAPF, there are four possible actions for agents in the grid graph and one action for waiting. Therefore, the grade of the search space is $O(5^k)$, where $k$ is the number of agents. However, the grade of the search space of MAPFWR can be infinite, since each agent can get any orientation and can wait for any interval. Therefore, we assume that the waiting time is limited to a given interval $\mParTimeWaiting \in \mContinuousPositive$. So the cost of the arc $(\mIndexState_1,\mIndexState_2)$ for a waiting agent $r$ is $c^W_r(\mIndexState_1,\mIndexState_2)=\mParTimeWaiting$, while the cost of the arc for a moving agent is $c^M_r(\mIndexState_1,\mIndexState_2) = \mGetTimeRotation{\mIndexBot}{\mValAngle}{\mParMaxAngularVelocity{\mIndexBot}} + \mGetTimeDrive{\mIndexBot}{s}{\mParAcceleration{\mIndexBot}}{\mParMaxVelocity{\mIndexBot}}{\mParDeceleration{\mIndexBot}}$. At this, the times for rotation and driving are considered, and $s$ is the Euclidean distance.

With the discreted waiting time, the grade of the search space in MAPFWR is similar to the grade of the search space in MAPF, but the time for each action differs from one agent to another. An example illustrated in Fig. \ref{fig:pp_examplereservation} shows the reservation for a path from $\mIndexWaypoint_1$ to $\mIndexWaypoint_3$ with waiting time 0 at $\mIndexWaypoint_2$. The marked area (blue) means that agents cannot go through this node for the given time, e.g., because another agent already has an ongoing reservation for the node at the time. For MAPF in the left-hand graph, two actions are required, namely ``move to $\mIndexWaypoint_2$'' and ``move to $\mIndexWaypoint_3$''. However, there are two possible cases for MAPFWR. For the case in the central graph (where a quick stop at $\mIndexWaypoint_2$ is done) we get the path: $(\mIndexWaypoint_1,true,0),(\mIndexWaypoint_2,true,0),(\mIndexWaypoint_3,true,0)$. And for the case in the right-hand graph (without stopping at $\mIndexWaypoint_2$) we get the path: $(\mIndexWaypoint_1,true,0),(\mIndexWaypoint_2,false,0),(\mIndexWaypoint_3,true,0)$. $\mIndexWaypoint_2$ in the central graph is blocked for a longer time due to the times for acceleration and deceleration necessary for the stop; instead, the reservation of $\mIndexWaypoint_2$ in the right-hand graph is possible without overlapping. The same problem can also appear in the reservation for a sequence of nodes.
\begin{figure}[h]
	\centering
	\begin{subfigure}[b]{0.75\textwidth}
		\includegraphics[width=\textwidth]{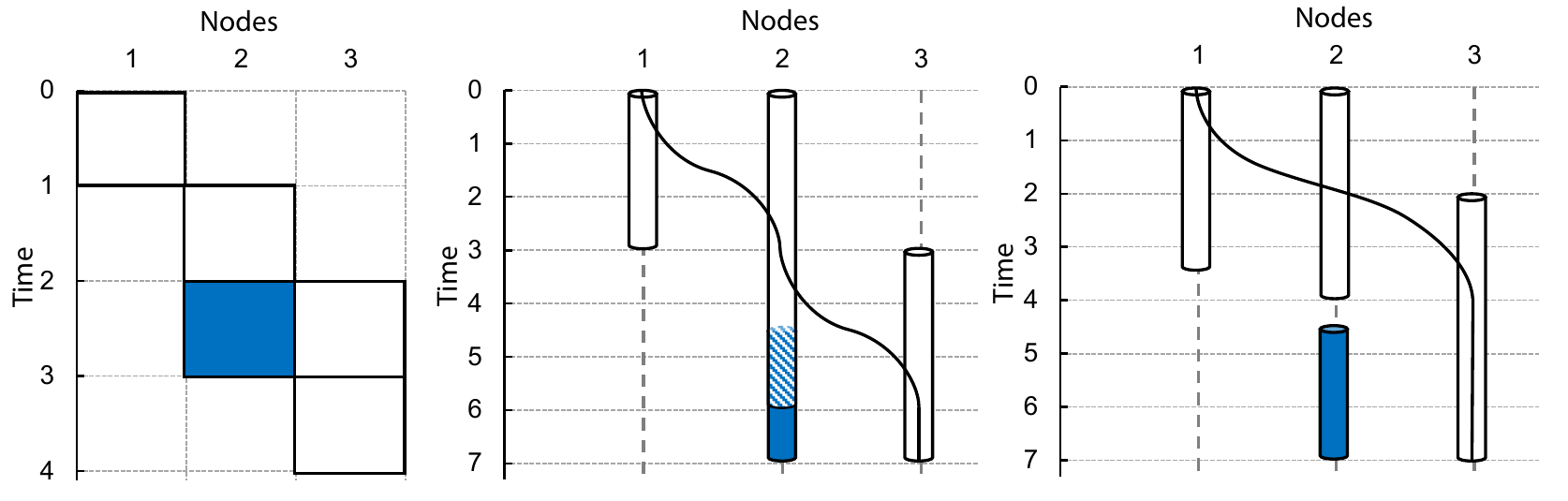}
	\end{subfigure}
	\begin{subfigure}[b]{0.15\textwidth}
		\includegraphics[width=\textwidth]{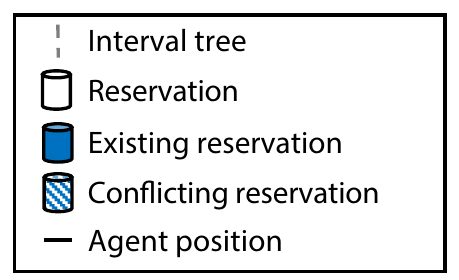}
	\end{subfigure}
	\caption{The reservation for the path from $\mIndexWaypoint_1$ to $\mIndexWaypoint_3$ with waiting time 0 at $\mIndexWaypoint_2$. From left to right: MAPF, MAPFWR with stopping at $\mIndexWaypoint_2$, MAPFWR without stopping at $\mIndexWaypoint_2$}
	\label{fig:pp_examplereservation}
\end{figure}

Therefore, we provide two solutions. The first solution is that we generate a move action only for connected nodes and a wait action with a fixed given time. Once a conflict occurs, we search the move actions in the same direction until we get the first node with a move action without conflict, or no further nodes exist in that direction. Therefore, the maximum grade of states in the search space for an agent is equal to the grade of nodes in the graph. The second solution is extending reservations to the next node in the same direction by assuming a continuous drive. Fig. \ref{fig:pp_costs} shows an example to generate the successors for the node $\mIndexWaypoint_4$, namely $\mIndexWaypoint_5$, $\mIndexWaypoint_6$, $\mIndexWaypoint_7$. If we ignore the wait actions, the number of nodes corresponds to the number of states. The cost for $\mIndexWaypoint_5$ is equal to the sum of the drive time until $\mIndexWaypoint_4$, the time for rotating by the angle $\beta$ and the moving time between $\mIndexWaypoint_4$ and $\mIndexWaypoint_5$. This is analogously done for $\mIndexWaypoint_7$ with the angle $\gamma$. For the calculation of the cost of $\mIndexWaypoint_6$, a backward search is until the last rotation is done (in this example: $\mIndexWaypoint_1$). We can infer that the agent had to stop at this node. Thus, the cost of $\mIndexWaypoint_6$ is determined by adding up the sum of cost up to $\mIndexWaypoint_1$, the time for rotating by the angle $\alpha$, and the drive time from $\mIndexWaypoint_1$ to $\mIndexWaypoint_6$ without stopping. The backward search can be done in $O(1)$, if we keep track of the last rotation or waiting action for each node.

\begin{figure}[h]
	\centering
	\includegraphics[width=0.75\textwidth]{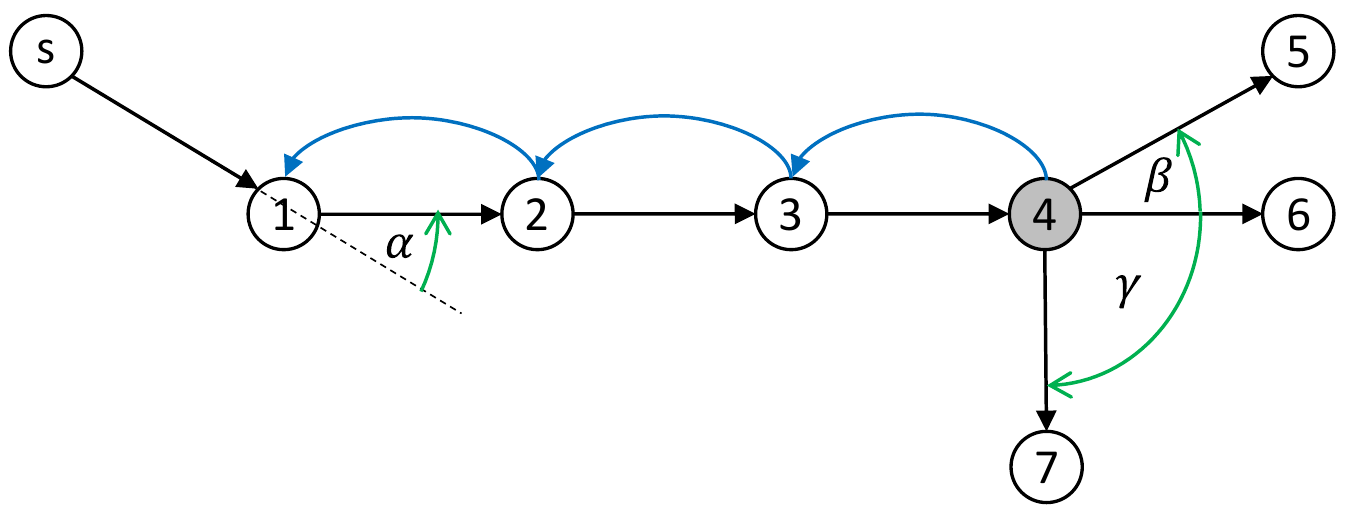}
	\caption{One example for the calculation for the costs of the successors of $\mIndexWaypoint_4$}
	\label{fig:pp_costs}
\end{figure}

In the second solution above, we consider only times (costs), so we can abandon the wait actions. Moreover, we don't need to generate and check the reservations. In this case, the grade of the search space in our problem is equal to the grade of search space in MAPF.

Now we have to prove that the A*-Algorithm is complete and admissible in the search space of our problem, because some algorithms we discuss in the next section are based on the A*-Algorithm. An algorithm is complete if it terminates with a solution in case one exists, while an algorithm is admissible if it is guaranteed to return an optimal solution whenever a solution exists. According to the properties shown in \cite{Pearl.1984}, when the search space is a tree, there is a single beginning state and a set of goal states, the cost of the path is the sum of the costs of all arcs in this path, and the heuristic function $h$ meets the following conditions: $h(\mIndexState)\geq  0  \ \forall n$ and $h(\mIndexState)=0$ if $n$ is a goal state. Moreover, each state should have a finite number of successors and the cost $c(\mIndexState_1,\mIndexState_2)$ of each arc $(\mIndexState_1,\mIndexState_2)$ in the search space should meet the condition $c(\mIndexState_1,\mIndexState_2) \geq \delta \ge 0$. According to the problem description in Section \ref{sec:pp_Problem} and this section, the search space of our problem fulfills most of the conditions, except the last one. Therefore, we only have to prove that the cost of each arc in $S$ has a finite lower bound $\delta$ (see the proof in Section \ref{sec:pp_Theorem}). So the A*-Algorithm is complete and admissible in the search space of our problem.

In the next section, we discuss the path planning algorithms; most of them use the A*-algorithm for searching a path from actual location $\mIndexWaypoint_0$ to goal location $\mIndexWaypoint_e$ for an agent $\mIndexBot$. The heuristic function $h(\mIndexState)$ is defined as:
\[h(\mIndexState)=\mGetTimeDrive{\mIndexBot}{\mGetDistanceTimeIndependent{\mIndexWaypoint_0}{\mIndexWaypoint_e}}{\mParAcceleration{\mIndexBot}}{\mParMaxVelocity{\mIndexBot}}{\mParDeceleration{\mIndexBot}}\]
where $\mGetDistanceTimeIndependent{\mIndexWaypoint_0}{\mIndexWaypoint_e}$ is the Euclidean distance from $\mIndexWaypoint_0$ to $\mIndexWaypoint_e$. Since the moving time increases monotonically during the trip and the Euclidean distance satisfies triangle inequality, hence, the heuristic function $h(\mIndexState)$ is consistent.
	
	
	\section{Algorithm design}\label{sec:pp_Algo}
	In this section we describe our implemented algorithms, which mainly differ to the original ones for the MAPF with reservation table considering continuous time (see Section \ref{subsec:pp_datasturcture}) and considering kinematic constraints in continuous time. That is followed by the description of our implemented path-planning algorithms in terms of their differences from the existing ones for the MAPF problem, including WHCA*, FAR, BCP, OD\&ID and CBS.  Finally, a method for resolving deadlocks is described.
\subsection{Data structure} \label{subsec:pp_datasturcture}
We describe in this subsection the reservation table, which is introduced in \cite{Silver.2005} for solving the MAPF problem. According to Silver, the impassable space-time regions are marked in the reservation table with the form $(x,y,\mIndexTime)$ and they are stored in a hash table with random keys. Note that $x,y$ are coordinates and $\mIndexTime$ is a timepoint. The left-hand graph in Fig. \ref{fig:pp_reservationsdiscretevscontinuous} illustrates the reservation table used in \cite{Silver.2005}, where the reservations of robot 1 and 2 are marked with green and blue colors. These impassable regions should be avoided during searches of the next robots. The node $\mIndexWaypoint$ is blocked if one robot waits at it. If one robot goes through the arc $(\mIndexWaypoint_1,\mIndexWaypoint_2)$, then its corresponding nodes $\mIndexWaypoint_1$ and $\mIndexWaypoint_2$ are blocked until the robot goes past the arc. Hence, the distance for robots traveling in a convoy is at least one arc.

We adopt the reservation table to store occupied regions, because it is a sparse data structure, which considers space and time. Since our problem does not consider discrete and constant timepoints, a different implementation compared with \cite{Silver.2005} is illustrated in the right-hand graph of Fig. \ref{fig:pp_reservationsdiscretevscontinuous}. For each node in the graph, one interval tree (see \cite{Cormen.2010}, pp. 350\textendash357) is generated, if a robot waits at it or goes through it (this is also called lazy initialization). Also, a set of intervals is included in each interval tree and each interval has its starting time $\mIndexTime_s$ and ending time $\mIndexTime_e$. The search for one interval $(\mIndexTime_1,\mIndexTime_2)$ at node $\mIndexWaypoint$ and the test of overlapping can be realized using a binary search with runtime $O(\log n)$.

 \begin{figure}[h]
 	\centering
 	\includegraphics[width=0.6\textwidth]{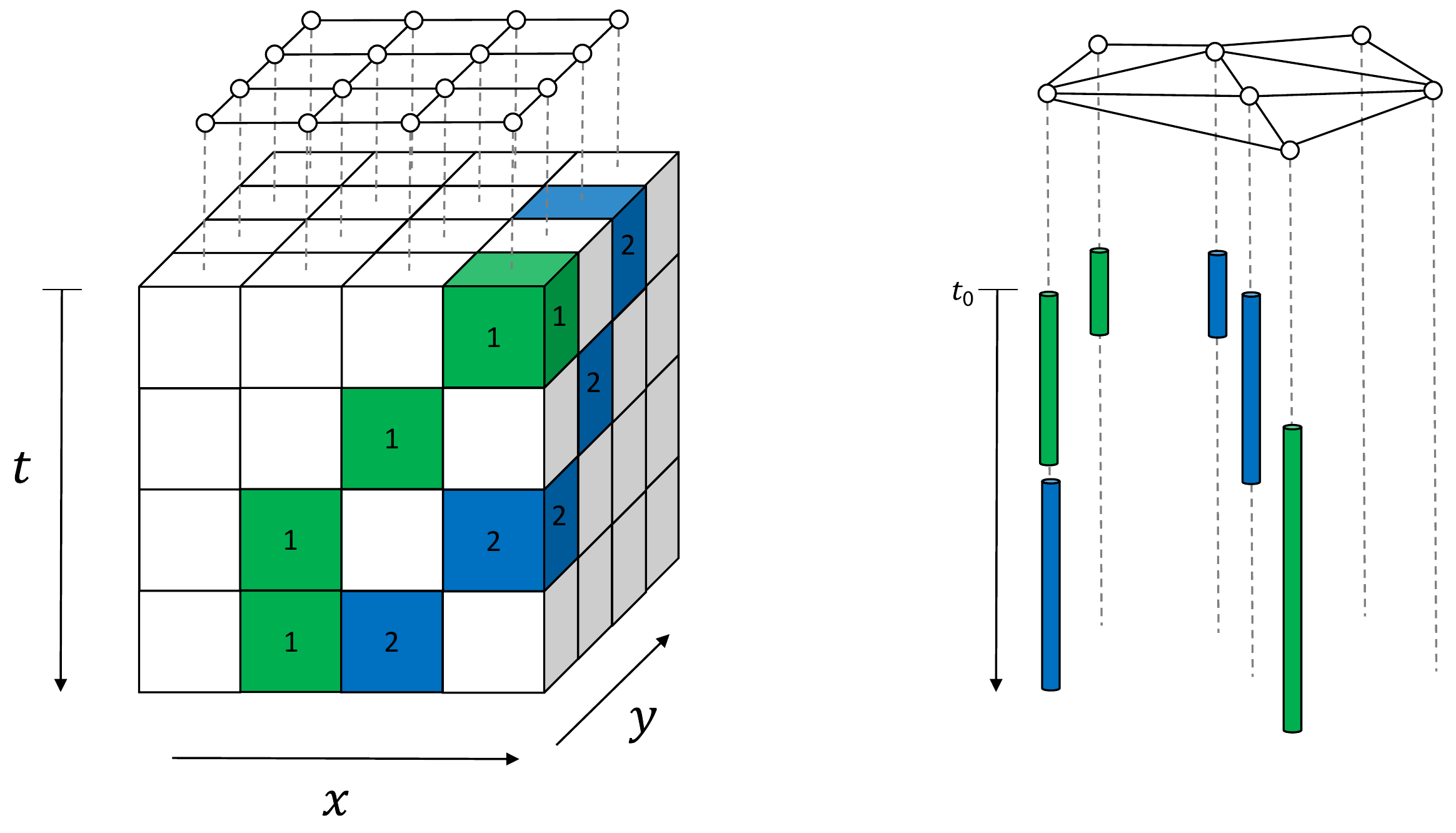}
 	\caption{Data structure for reservations in MAPF of \cite{Silver.2005} vs. MAPFWR path planning}
 	\label{fig:pp_reservationsdiscretevscontinuous}
 \end{figure}
 

An example of the reservation for a robot in the reservation table is shown in Fig. \ref{fig:pp_reservationsexample}. The robot begins with node $\mIndexWaypoint_2$ at time $t_0$ and ends with node $\mIndexWaypoint_6$. Once the robot travels between two nodes, both nodes are blocked (vertical bars). Moreover, it takes longer time to speed up and slow down. This data structure is used in all path-planning algorithms in the following subsections.

\begin{figure}[h]
	\centering
	\includegraphics[width=0.66\textwidth]{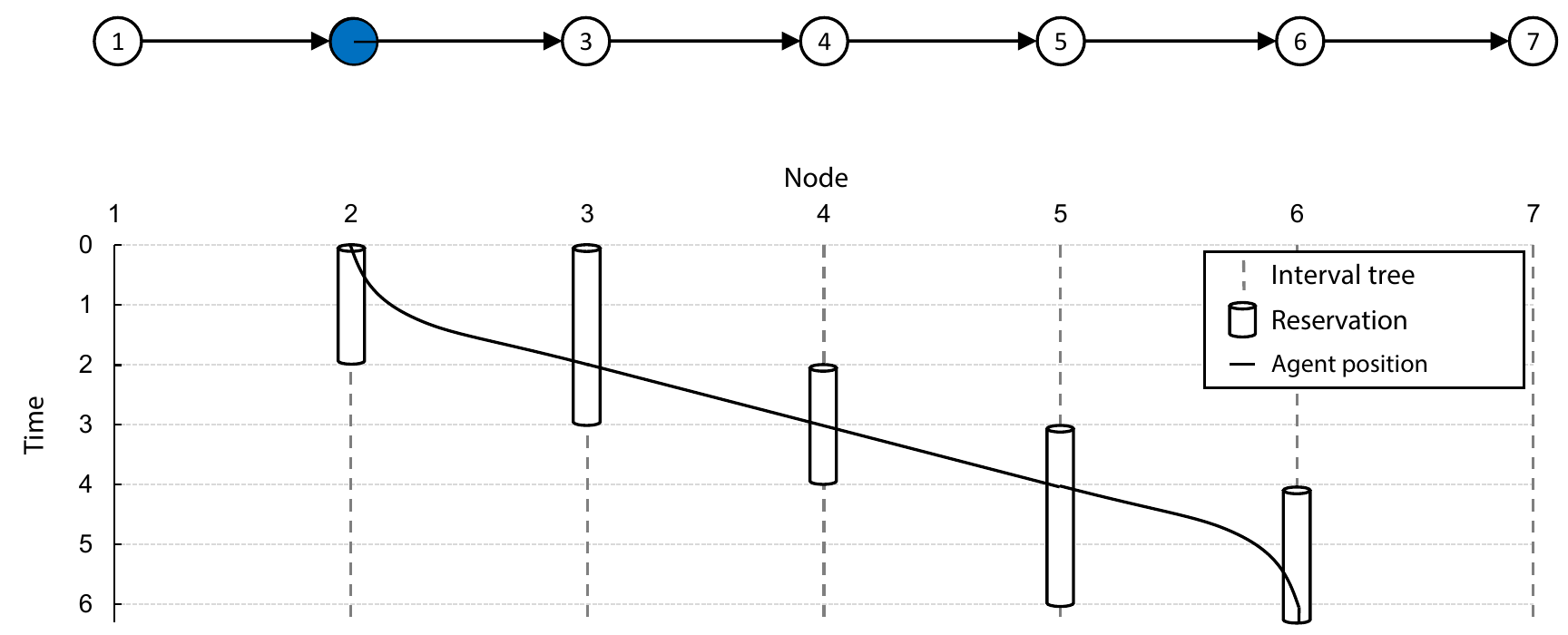}
	\caption{An example in the reservation table for the path from $\mIndexWaypoint_2$ to $\mIndexWaypoint_6$}
	\label{fig:pp_reservationsexample}
\end{figure}

\subsection{WHCA*} \label{sec:pp_WHCA*}
The main difference to the existing WHCA*, applied in MAPF in \cite{Silver.2005}, is the modified reservation table (see Section \ref{subsec:pp_datasturcture}). Moreover, we use another heuristic to calculate costs individually for our problem. We describe below the adjusted WHCA* for the MAPFWR problem.

We have an arbitrary robot $\mIndexBot$, which is located on the node $\mIndexWaypoint_s$ on time $\mIndexTime$ in the graph, and its goal is the node $\mIndexWaypoint_e$. The A*-algorithm finds an optimal path for this robot from $\mIndexWaypoint_s$ to $\mIndexWaypoint_e$, if the heuristic function $h(\mIndexWaypoint_s)$ is admissible. For now, we assume that the nodes in the graph are equal to the states in the search space; moreover, we do not consider the other robots and wait actions. $\mGetAStarCostsEstimatedTwoD{\mIndexWaypoint_s}$ is calculated with Eq. \eqref{eq:pp_drivertime}.\\
\begin{equation}\label{eq:pp_drivertime}
\mGetAStarCostsEstimatedTwoD{\mIndexWaypoint_s} :=\mGetTimeDrive{\mIndexBot}{\mGetDistanceTimeIndependent{\mIndexWaypoint_s}{\mIndexWaypoint_e}}{\mParAcceleration{\mIndexBot}}{\mParMaxVelocity{\mIndexBot}}{\mParDeceleration{\mIndexBot}}
\end{equation}
where the time is estimated by calculating the driving time for the euclidean distance while considering acceleration and deceleration times. Thus, it is impossible for a robot to cover the distance between the two waypoints in shorter time. We use this heuristic function $\mGetAStarCostsEstimatedTwoD{\mIndexWaypoint_e}$ in the Reverse Resumable A* (RRA*) algorithm, which searches the path from $\mIndexWaypoint_e$ to $\mIndexWaypoint_s$ in $\mSetGraphReverse$; therefore the heuristic function $\mGetAStarCostsEstimatedByRRA{\mIndexWaypoint_e}$ is equal to $\mGetAStarCostsEstimatedTwoD{\mIndexWaypoint_e}$. $g^{RRA^*(\mIndexWaypoint_e)}$ is the function that calculates the sum of time for rotation and moving from $\mIndexWaypoint_e$ to $\mIndexWaypoint_s$ like discussed above. This algorithm ends when $\mIndexWaypoint_s$ enters the closed set $C^{RRA^*}$ and we do not consider the rotations of $\mIndexWaypoint_s$ and $\mIndexWaypoint_e$ here. As explained in \cite{Silver.2005}, the reason why we use RRA* in WHCA* is that the result of RRA* provides better assumption for the paths without collisions. It is a lower bound of the solutions found in WHCA*, because the solution found in RRA* is optimal for a single agent. If $\mIndexWaypoint_s$ enters the closed set $C^{\text{\textit{RRA}}^*}$ then the value of the heuristic function is equal to the value of the cost function of RRA* to the goal node; otherwise the search continues until $\mIndexWaypoint_i \in C^{\text{\textit{RRA}}^*}$. The heuristic function $\mGetAStarCostsEstimatedByWHCA{\mIndexWaypoint_s}$ used by WHCA* can now be defined like in Eq. \eqref{eq:pp_hWHCA*}.
\begin{equation} \label{eq:pp_hWHCA*}
\mGetAStarCostsEstimatedByWHCA{\mIndexWaypoint_s} =
\begin{cases}
g^{\text{\textit{RRA}}^*}(\mIndexWaypoint_s) & \text{if } \mIndexWaypoint_s \in C^{\text{\textit{RRA}}^*} \\
\text{\textit{RRA}}^*(\mIndexWaypoint_e,E^{-1},{\mIndexWaypoint_s},g^{\text{\textit{RRA}}^*},h^{\text{\textit{RRA}}^*}) & \text{other cases}
\end{cases}
\end{equation}

In the following, we describe two variants of WHCA*, namely volatile and non-volatile WHCA*. The first one was used in \cite{Silver.2005} with some modifications. However, this variant is time-consuming, since the calculated paths are not stored for the next execution and it is suitable for the case where the moving obstacles change the graph in the next execution, such as in a computer game. In our case, the obstacles are deterministic and predictable. Therefore, we develop the latter, non-volatile WHCA*, in Section \ref{subsubsec:pp_nonvolatileWHCA*}.
\subsubsection{Volatile WHCA*} \label{subsubsec:pp_volatileWHCA*}
Algorithm \ref{alg:pp_WHCA*volatile} shows how volatile WHCA* works. Let $\mSetBuckets' \subset \mSetBuckets$ be a subset of pods, which are not carried by robots, while $\mSetBots' \subset \mSetBots$ is a subset of robots, which have subtask $\mSubTaskMove$. Also, the priority $\mVarPriority{\mIndexBot}$  for each robot $\mIndexBot$ is set to $0$ at the beginning of the algorithm (line \ref{line:pp_whcavinitprio}). The search repeats until a path $\mValPath{\mIndexBot}{\mIndexTime}$ is found for each robot $\mIndexBot$ at time $\mIndexTime$ or the number of iterations $i$ reaches the given iteration limit $\mParIteration$. In each iteration, the reservation table $\mVarResTab$ is initialized with fixed reservations (using the function $\mGetFixedReservation{\mSetBots'}$). These are called ``fixed'', because for each already moving robot $\mIndexBot \in \mSetBots'$ there are nodes that are required to be reserved until $\mIndexBot$ reaches its next planned stop. Then, the robots are sorted based on three criteria (see the sort-function $\mSortAgentwithPriority{\mSetBots'}{\mVarPriority{r}}$). The first one is the priority of the robot, while the second one is to check whether a robot carries a pod. These robots are preferred because they cannot move beneath other pods, i.e. they have fewer paths available to get to their goals. The last criterion is the distance towards the goal, preferring robots nearer to their goal. The set $\mSetBlockedNodes$ contains all blocked nodes. The positions of the robots $\mIndexBot \in \mSetBots \setminus \mSetBots'$ are blocked and stored in the set $\mSetBlockedNodes$. Additionally, if the robot $\mIndexBot$ is carrying a pod (i.e., the function $\mGetBotIsCarrying{\mIndexBot}{\mIndexTime}$ is true), then the nodes all other pods $\mIndexBucket \in \mSetBuckets'$ are stored at are blocked and stored in $\mSetBlockedNodes$ as well. Once the destination of the robot $\mIndexBot$ has been changed, RRA* should be recalculated with the actual set $\mSetBlockedNodes$ (see line \ref{line:pp_whcavrrastarupdateandpriocalc}). Also, wait actions are added based on its priority (through the function $\mGetWaitsteps{\mIndexBot}{\lfloor 2^{\mVarPriority{\mIndexBot}-1} \rfloor}$). After that a new path is expanded using $A^*_{ST}$, which considers space and time (see Section \ref{sec:pp_SearchSpace}). If a path was found ($\pi_{\mIndexBot} \neq \emptyset$), then the reservation table $\mVarResTab$ is updated ($\mAddReservation{\pi_{\mIndexBot}}{\mVarResTab}$), otherwise the priority of this robot $\mIndexBot$ should be increased (see line \ref{line:pp_whcavprioincrease}).

\begin{algorithm}[h]
	\caption{$\text{\textit{WHCA}}_v^*(\mIndexTime,\mIndexWaypoint,\mSetBots',\mSetBuckets',\mParIteration)$}
	\label{alg:pp_WHCA*volatile}
	\DontPrintSemicolon
	\LinesNumbered
	\lForEach {$\mIndexBot \in \mSetBots'$}{
		$\mVarPriority{r} \leftarrow 0$\label{line:pp_whcavinitprio}
	}
	$i \leftarrow 1$\\
	\While{$i \leq \mParIteration \vee \exists \mIndexBot \in \mSetBots': \pi_{\mIndexBot}=null$}{
		$\mVarResTab \leftarrow \mGetFixedReservation{\mSetBots'}$,
		$\mSortAgentwithPriority{\mSetBots'}{\mVarPriority{r}}$\\
		\ForEach {$\mIndexBot \in \mSetBots'$}{
			$\mSetBlockedNodes \leftarrow \mGetPosition{\mSetBots \setminus \mSetBots'}$\\
			\lIf{$\mGetBotIsCarrying{\mIndexBot}{\mIndexTime}$}{
				$\mSetBlockedNodes \leftarrow \mSetBlockedNodes \cup \mGetPosition{\mSetBuckets'}$
			}
			\If{$\mGetDestinationIsChanged{\mIndexBot}$}{
				$RRA^*(\mIndexBot,\mSetBlockedNodes)$,$\pi_{\mIndexBot}=\mGetWaitsteps{\mIndexBot}{\lfloor 2^{\mVarPriority{\mIndexBot}-1} \rfloor}$\label{line:pp_whcavrrastarupdateandpriocalc}\\
				$\pi_{\mIndexBot}=\pi_{\mIndexBot} \bigcup A^*_{ST}(\mIndexTime,\mIndexWaypoint,\mIndexBot,\mVarResTab, \mSetBlockedNodes,g^{WHCA^*},h^{WHCA^*})$
			}
			\lIf{$\pi_{\mIndexBot} \neq \emptyset$}{
				$\mAddReservation{\pi_{\mIndexBot}}{\mVarResTab}$
			}
			\lElse{
			$\mVarPriority{r} \leftarrow \mVarPriority{r}+1$\label{line:pp_whcavprioincrease}
			}
	}
	$i \leftarrow i+1$
}
\textbf{return} $\pi$
\end{algorithm}

For each robot, we search a sequence of actions in the search graph to get to the goal without a collision. We assume that a robot stops at node $\mIndexWaypoint_i$ and then we generate the following states in the search graph. They are only generated, if they do not block any robots. Moreover, the states, which are in conflict with existing reservations in the reservation table, cannot be generated as well. But we consider the reservation table only within time window $w$; after that time window, the rest of the paths calculated by $\text{\textit{RRA}}^*$ are used. We search a sequence of actions for each robot iteratively; if we cannot find a sequence of actions for a robot without conflicts, then the priority of that robot is increased and we restart the search.

Fig. \ref{fig:pp_exampleWHCA*} shows an example of evasion, where robot $\mIndexBot_1$ tries to move from $s_1$ to $g_1$, while robot $\mIndexBot_2$ tries to move from $s_2$ to $g_2$. Also, we follow the criteria of sorting that we discussed above. Moreover, we assume in this example that the time to go through an arc is one timepoint. The time window $w$ is 10 in this example. We begin with the robot $\mIndexBot_1$, and we get the following reservation with the form [beginning time, ending time]: [0,1] for $\mIndexWaypoint_3$; [0,2] for $\mIndexWaypoint_2$; [1,3] for $\mIndexWaypoint_1$. For the robot $\mIndexBot_2$ we do not get any reservations without collision, since [0,1] should be reserved for it. Therefore, we increase the priority of $\mIndexBot_2$. Here, we get a deadlock, because each robot tries to reach another's beginning node. To solve this deadlock, the robot with the higher priority should at first choose the wait action. According to line \ref{line:pp_whcavrrastarupdateandpriocalc} in Algorithm \ref{alg:pp_WHCA*volatile}, the length of the wait robot grows exponentially with the priority. In this example, $\mIndexBot_1$ in the fourth iteration gets the priority 1 and $\mIndexBot_2$ gets the priority 2. Therefore, the reservation of $\mIndexBot_2$ is done as follows: [0,2] for $\mIndexWaypoint_2$; [1,3] for node $\mIndexWaypoint_3$; [2,3] for $\mIndexWaypoint_4$. After that, $\mIndexBot_1$ can find a path without collisions, namely the path 3-5-3-2-1. In another case, if we try the robot $\mIndexBot_2$ in the first iteration, then the deadlock can be solved as well.

\begin{figure}[h]
	\centering
	\includegraphics[width=0.35\textwidth]{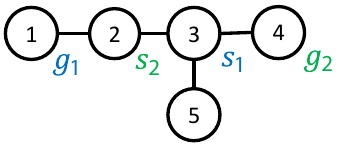}
	\caption{An example of evasion, where robot $\mIndexBot_1$ tries to move from $s_1$ to $g_1$, while robot $\mIndexBot_2$ tries to move from $s_2$ to $g_2$}
	\label{fig:pp_exampleWHCA*}
\end{figure}

\subsubsection{Non-volatile WHCA*} \label{subsubsec:pp_nonvolatileWHCA*}
As mentioned before, the volatile variant in the previous subsection has a problem in that the existing path and reservation for each robot are recalculated in each execution of the algorithm. This occurs already if only one robot with a move action does not have a path. Instead of that, the non-volatile variant of WHCA* stores the existing path and reservation for each robot, which brings a much shorter runtime. However, reusing the existing paths might cause a problem for generating new paths, since we cannot do any modification of existing paths to adopt the new ones. A comparison between the volatile and non-volatile variants of WHCA* can be found in the next section. Alg. \ref{alg:pp_WHCA*nonvolatile} shows the process of the non-volatile variant. The set $\mSetBots' \subset \mSetBots$ is a subset of robots, which has subtask $\mSubTaskMove$ and does not have a path yet. First of all, they are sorted analogously to how they are sorted in the volatile variant (function $\mSortAgent{\mSetBots'}$). Also, the reservation table $\mVarResTab$ is reorganized with the function $\mReorganizeReservationTable{\mVarResTab}$. This means that all future reservations of the robots in need for a new path are dropped (see Section \ref{subsec:pp_datasturcture}). Then, we find a path for each robot iteratively (similar to the volatile variant). Moreover, we check at the end of each path search for the robot $\mIndexBot$ whether a final reservation is possible (function $\mAddFinalReservation{\mIndexBot}$). A final reservation blocks the nodes, where the last action occurs. Hence, a path with only a sequence of wait actions is always possible. By doing this, we can find a solution without prioritizing.

\begin{algorithm}[h]
	\caption{$\text{\textit{WHCA}}_n^*(\mIndexWaypoint,\mSetBots',\mSetBuckets',\mParIteration)$}
	\label{alg:pp_WHCA*nonvolatile}
	\DontPrintSemicolon
	\LinesNumbered
	$\mSortAgent{\mSetBots'}$,
	$\mReorganizeReservationTable{\mVarResTab}$\\
	\ForEach{$\mIndexBot \in \mSetBots'$}{
			$\mSetBlockedNodes \leftarrow \mGetPosition{\mSetBots \setminus \mSetBots'}$\\
			\lIf{$\mGetBotIsCarrying{\mIndexBot}{\mIndexTime}$}{
				$\mSetBlockedNodes \leftarrow \mSetBlockedNodes \cup \mGetPosition{\mSetBuckets'}$
			}
			\lIf{$\mGetDestinationIsChanged{\mIndexBot}$}{
				$RRA^*(\mIndexBot,\mSetBlockedNodes)$
			}
		$\pi_{\mIndexBot}=\pi_{\mIndexBot} \bigcup A^*_{ST}(\mIndexTime,\mIndexWaypoint,\mIndexBot,\mVarResTab, \mSetBlockedNodes,g^{\text{\textit{WHCA}}^*},h^{\text{\textit{WHCA}}^*})$,
		$\mAddReservation{\pi_{\mIndexBot}}{\mVarResTab}$,
		$\mAddFinalReservation{\mVarResTab}$
	}
	\textbf{return} $\pi$
\end{algorithm}

Now we apply this algorithm to the example in Fig. \ref{fig:pp_exampleWHCA*}. The final reservation does not guarantee to find a path for the robot $\mIndexBot_1$, since $\mIndexWaypoint_2$ is blocked for $\mIndexBot_2$. Such conflict might often occur if we consider a large number of robots. Therefore, we increase the heuristic costs of the node, which is on the shortest path of another robot, for robot $\mIndexBot$ (see Eq. \ref{eq:pp_heuristiccostp}). So the robot $\mIndexBot_1$ does not stay at $\mIndexWaypoint_3$ but at $\mIndexWaypoint_5$, since $\mIndexWaypoint_3$ is on the shortest path of $\mIndexBot_2$, and a higher heuristic cost is considered on $\mIndexWaypoint_3$. 
\begin{equation} \label{eq:pp_heuristiccostp}
h_p^{\text{\textit{WHCA}}^*}=h^{\text{\textit{WHCA}}^*}(\mIndexWaypoint_i)+c_p| \{\mIndexBot' \in \mSetBots'| \mIndexBot' \neq \mIndexBot \wedge \mIndexWaypoint_i \in \pi^{\text{\textit{RRA}}^*}(\mIndexBot',\mIndexWaypoint_{\mIndexBot})\}|
\end{equation}

\subsection{FAR}
The FAR algorithm from \cite{Wang.2008} was used to calculate paths in a flow-annotated search graph for the MAPF problem. Each grid graph can be converted to a flow-annotated graph, but some rules should be held to ensure the connectivity of the graph (see \cite{Wang.2008}). The flow-annotated graph is necessary for the FAR algorithm to detect deadlocks and to resolve them. We use a warehouse layout as in \cite{Lamballais.2016}, which contains some properties of a flow-annotated graph (see Section \ref{subsubsec:pp_Layout}). The contribution here is different evasion strategies designed for the MAPFWR, which will be described latter in this section.

Wang and Botea's idea in \cite{Wang.2008} was to use the A*-algorithm to find paths for robots for as long as possible without collisions. They modify the reservation table of \cite{Silver.2005} to reserve a number of steps ahead von the generated paths. Instead, we use the reservation table shown in Section \ref{subsec:pp_datasturcture}. Algorithm \ref{alg:pp_FAR} shows how FAR works. Similarly to WHCA*, the paths of all robots $\mIndexBot \in \mSetBots'$ with subtask $\mSubTaskMove$ are planned iteratively. Recall that $\mSetBuckets'$ is the set of pods, which are not being carried by robots. Initially, the reservation table $\mVarResTab$ stores all existing reservations of all robots to their next nodes and final reservations for these (returning from the function $\mGetFixedAndFinalReservation{\mSetBots'}$). As soon as a path is generated for a robot $r$, the final reservations in $\mVarResTab$ will be deleted (function $\mRemoveFinalReservation{\mIndexBot}{\mVarResTab}$). The algorithm $RRA^*$ is restarted with the actual block nodes in $\mSetBlockedNodes$. From the resulting path the first \textit{Hop} is extracted. The \textit{Hop} is a sequence of actions, which include \textit{rotate}, \textit{move} from a starting node and \textit{stop} at an ending node. This ending node is considered as the next beginning node for the recalculation of the path. This differs from the original idea of FAR, which stops at each node and recalculates the path there. Since we consider the physical properties of robots, it makes more sense to use the \textit{Hop}. The function $\mGetHop{RRA^*(\mIndexBot, \mSetBlockedNodes)}{\mVarResTab}$ returns the \textit{Hop} (line \ref{line:pp_fargethop}). If the \textit{Hop} does cause collisions, it is shortened until it is possible to submit it to the reservation table. If the \textit{Hop} does not contain any nodes, then the returning value $\mIndexBot'$ is the robot who blocks the next node. If no path is found in RRA* ($\pi_\mIndexBot=\emptyset$), then the robot $\mIndexBot'$ is considered as the next $\mIndexBot$. If it is the first time, then the robot $\mIndexBot$ will wait for a given time interval (a wait action is added to the path $\pi_\mIndexBot$ through the function $\mAddWaitAction{\pi_\mIndexBot}$). The relation $(\mIndexBot,\mIndexBot')$ means that a robot $\mIndexBot$ waits for another robot $\mIndexBot'$. Such a relation is established if no \textit{Hop} is found for the robot $\mIndexBot$ and removed again as soon as one is found. If there is a circle of the transitive closure of the relation from $\mIndexBot$ (returning from the function $\mGetRelationIsCircle{\mIndexBot}{\mIndexBot'}$) or the robot $\mIndexBot$ waits more than the given waiting time $\mParTimeWaiting$, then an evasion strategy $\mUseEvasionStrategy{\mIndexBot}$ should be called. The evasion strategy returns an alternative path without wait actions. In the following subsections, two new evasion strategies are described. At the end of the FAR algorithm, the final reservations will be stored in the reservation table again (using function $\mAddFinalReservation{\mVarResTab}$).

\begin{algorithm}
	\caption{$\text{\textit{FAR}}(\mIndexTime,\mIndexWaypoint,\mSetBots',\mSetBuckets',\mParTimeWaiting)$}
	\label{alg:pp_FAR}
	\DontPrintSemicolon
	\LinesNumbered
	$\mVarResTab \leftarrow \mGetFixedAndFinalReservation{\mSetBots'}$\\
	\ForEach {$\mIndexBot \in \mSetBots'$}{
		$\mRemoveFinalReservation{\mIndexBot}{\mVarResTab}$,
		$\mSetBlockedNodes \leftarrow \mGetPosition{\mSetBots \setminus \mSetBots'}$\\
		\lIf{$\mGetBotIsCarrying{\mIndexBot}{\mIndexTime}$}{
			$ \mSetBlockedNodes \leftarrow \mSetBlockedNodes \bigcup \mGetPosition{\mSetBuckets'}$
		}
		$(\pi_\mIndexBot,\mIndexBot') \leftarrow \mGetHop{RRA^*(\mIndexBot, \mSetBlockedNodes)}{\mVarResTab}$\label{line:pp_fargethop}\\
		\If{$\pi_\mIndexBot=\emptyset$}{
			\lIf{$\mGetLastAction{\pi_\mIndexBot}\neq wait$}{$\mAddWaitAction{\pi_{\mIndexBot}}$}  
			\Else {
				\lIf{$\mGetRelationIsCircle{\mIndexBot}{\mIndexBot'} \vee t > \mParTimeWaiting$}{
					$\mUseEvasionStrategy{\mIndexBot}$}
			}    	
		}
		$\mAddFinalReservation{\mVarResTab}$
	}	
	\textbf{return} $\pi$
\end{algorithm}

\subsubsection{Evasion strategy 1: rerouting} \label{subsubsec:pp_evasionStrategy1}
In this strategy we try to find a new path for a robot $\mIndexBot$ to escape from a deadlock. The node, at which the next blocked robot stands, is stored to $\mSetBlockedNodes$. Then, a new path is generated again. If another robot is blocked on this new path, then this rerouting starts again. The number of calls for rerouting is limited by a given number. If the number of calls exceeds that given number or no path is found due to the blocked nodes, then this robot $\mIndexBot$ has to wait for a given time period. This method is called $\text{\textit{FAR}}_r$ in the remainder of this paper.

\subsubsection{Evasion strategy 2: evasion step } \label{subsubsec:pp_evasionStrategy2}
Wang and Botea explain that a deadlock occurs if all four nodes of a square are occupied by robots, and each robot wants to exchange nodes with each other. In the flow-annotated graph, each robot has two arcs to leave the actual node. If one arc causes the deadlock with other robots, then another arc should be chosen. In our case, we choose a random arc, which does not cause any deadlocks. After the robot has gone through that selected arc, it should wait for a random time between 0 and $\mParTimeWaiting$. If that arc does not exist, then the robot should simply stay and wait for a given time. This strategy is called $\text{\textit{FAR}}_e$ in the remainder of this paper.

\subsection{BCP}
Geramifard et al. describe BCP as a meta-algorithm for the path planning of each robot. The main difference to the original algorithm is the reservation table shown in Section \ref{subsec:pp_datasturcture}. In the original algorithm, a hash table is used to detect collisions of a discrete time-horizon. In our modified version the reservation table stores continuous time intervals per robot and searches for collisions using binary search. As shown in Algorithm \ref{alg:pp_BCP}, each robot with the subtask $\mSubTaskMove$ is initialized with a heuristic function $h$ as shown in Eq. \eqref{eq:pp_drivertime}. BCP changes the function until a runtime limit $\mParTimeRunning$ is reached or paths without any collisions are found. In each iteration, the reservation table is initialized with fixed reservations. A path is determined by $A^*_S$ for each robot in the two-dimensional search space (without considering time) while considering blocked nodes. If there is no path found for a robot, then the robot has to wait for a given time period (line \ref{line:pp_bcpnopathwait}). As long as a path is found for the robot $\mIndexBot$, the reservations will be stored in the reservation table. If the reservation is not possible, since another robot $\mIndexBot'$ already has a reservation of this node $\mIndexWaypoint$, then the heuristic function will be modified to penalize the value of robot $\mIndexBot$ on $\mIndexWaypoint$. This and causes the search to consider alternative paths. If there are multiple collisions (detected through the function $\mDetectCollision{\mIndexBot}{\mVarResTab}$), then the first one is always chosen by function $\mGetCollision{\pi_\mIndexBot}{\mIndexBot}$. Also, the cost of $h^{BCP}_\mIndexBot$ is updated (function $\mGetCost{c}{h_r^{BCP}}$). If BCP stops due to a reached runtime limit, at least one robot's path contains a collision. This collision is detected during simulation, further execution of the path is stopped and BCP is called again. Hence, we need a lower bound of runtime between two calls, such that the robot can wait at the node. The deadlock can be solved with the method shown in Section \ref{subsec:pp_resolvedeadlock}.
\begin{algorithm}[h]
	\caption{$\text{\textit{BCP}}(\mIndexTime,\mIndexWaypoint,\mSetBots',\mSetBuckets',\mParTimeRunning)$}
	\label{alg:pp_BCP}
	\DontPrintSemicolon
	\LinesNumbered
	\lForEach {$\mIndexBot \in \mSetBots'$}{
		$h_r^{BCP} \leftarrow h$
	}
	\While{$t \leq \mParTimeRunning$}{
		$\mVarResTab \leftarrow \mGetFixedReservation{\mSetBots'}, c \leftarrow \emptyset$\\
		\ForEach {$\mIndexBot \in \mSetBots'$}{
			$\mSetBlockedNodes \leftarrow \mGetPosition{\mSetBots \setminus \mSetBots'}$\\
			\lIf{$\mGetBotIsCarrying{\mIndexBot}{\mIndexTime}$}{
				$\mSetBlockedNodes \leftarrow \mSetBlockedNodes \cup \mGetPosition{\mSetBuckets'}$
			}
			$\pi_\mIndexBot \leftarrow A^*_S(\mIndexBot,\mSetBlockedNodes,g,h_r^{BCP})$\\
			\lIf{$\pi_\mIndexBot = \emptyset$}{
				$\mAddWaitAction{\pi_\mIndexBot}$\label{line:pp_bcpnopathwait}
			}
			\Else{
				\lIf{$\mDetectCollision{\pi_\mIndexBot}{\mIndexBot}$}
				{
					$c \leftarrow \mGetCollision{\pi_\mIndexBot}{\mIndexBot}$,
					$\mGetCost{c}{h_r^{BCP}}$
					}
			}
		}
		\lIf{$c = \emptyset$}{\textbf{return} $\pi$}
	}
\textbf{return} $\pi$
\end{algorithm}

\subsection{OD\&ID}
When determining the next action for all agents the grade of the search space is $O(b^k)$, where $k$ is the number of robots and $b$ is the number of actions. As mentioned in Section \ref{sec:pp_Problem}, intermediate states are introduced in \cite{Standley.2010} to reduce the number of generated states in this algorithm. We describe in the following subsections the modified operator decomposition and independence detection for MAPFWR.
\subsubsection{Operator decomposition (OD)} 
One of the main modifications of OD for MAPFWR is to consider continuous times. Hence, full-value states cannot be reached in OD. Each action of a robot requires an individual time interval. Therefore, it is possible that from one state to its following state, all robots are located on different timepoints. So we define for our problem that each robot $\mIndexBot$ has an action in state $\mIndexState$ before a time stamp $\mVarTimeStampeOD{\mIndexBot}{\mIndexState}$ is reached. In the initial state $\mIndexState_0$, all robots have a set of empty actions and the time stamp $\mVarTimeStampeOD{\mIndexBot}{0}$ for each robot $\mIndexBot$ is equal to 0. From one state to its following state, the time stamp is increased with the time of the action, which the robot selects, while the time stamps of the other robots remain unchanged. $g(\mIndexState)$ is the sum of times from $\mIndexState_0$ to $\mIndexState$, while $h(\mIndexState)$ is the sum of estimated values for all robots to reach their goal nodes. These values are calculated by RRA* without considering collisions (similar to WHCA* in Section \ref{sec:pp_WHCA*}).

In the initial state, a robot is chosen randomly to generate the next state, since the time stamps of all robots are set to 0. In other cases, the robot with the lowest time stamp is chosen to generate the next state. The following state is generated for each action that does not cause any collisions. The reservation table is also used here to detect collisions. The reservation table is initially empty and gets populated each following state of this robot by the reservations of all other robots that end after $\mVarTimeStampeOD{\mIndexBot}{\mIndexState}$. In order to generate paths without collisions, a final reservation is required after the last reservation. For each possible action of a robot, the resulting reservation is tested for collisions. If there is no collision, then the time stamp is increased correspondingly and the following state is generated. Finally, the reservation table is emptied and the following state is considered. If all states are generated for a robot, then the A*-algorithm searches for the next state without any successors. After this selection, the robot with the lowest time stamp will be selected and the following state is generated, and so on.

The OD is not suitable for real-time problems, therefore, several stopping criteria are defined to limit the runtime. First of all, the OD stops if a state is reached where all robots arrive at their goal nodes. It also means that all paths are free of collisions. Secondly, the OD also stops if a path is found for each robot within a predefined time limit. This is similar to the time limit of WHCA*, but the searches of following states in OD are simultaneous, not iterative. The third stopping criterion is that the maximum number of expanded states is reached. In the second and third stopping criteria, the state is selected from a search space $\mSetSearchSpace' \subset \mSetSearchSpace$, which is defined in Eq. \ref{eq:pp_subsetSearchSpace},
\begin{equation} \label{eq:pp_subsetSearchSpace}
\mSetSearchSpace' = \left\{ \mIndexState \in \mSetSearchSpace \mid \max_{\mIndexBot \in \mSetBots} \mVarTimeStampeOD{\mIndexBot}{\mIndexState} \geq \dfrac{\max_{\mIndexState' \in \mSetSearchSpace}\max_{\mIndexBot \in \mSetBots}\mVarTimeStampeOD{\mIndexBot}{\mIndexState'}}{2}\right\}
\end{equation}
where the set $\mSetSearchSpace'$ includes all states, whose selected paths reach into the second half of the expanded time interval.

For better understanding of OD, the search space of A* with OD is shown in Fig. \ref{fig:pp_ODID} for the same example that is illustrated in Fig. \ref{fig:pp_exampleWHCA*}. Recall that a robot $\mIndexBot_1$ begins with $\mIndexWaypoint_3$ and ends with $\mIndexWaypoint_1$, while a robot $\mIndexBot_2$ begins with $\mIndexWaypoint_2$ and ends with $\mIndexWaypoint_4$. Let $\mParAcceleration{\mIndexBot}$, $\mParDeceleration{\mIndexBot}$ and $\mParMaxVelocity{\mIndexBot}$ be 1 for all $\mIndexBot \in \mSetBots$. Moreover, the length of each arc is also equal to 1. Therefore, the moving time can be calculated as the number of the arcs plus 1. The time of a rotation by 360$^{\circ}$ also is 1. The length of a wait action here is 5. The final reservation is used in this example as well. In the initial state $\mIndexState_0$, the moving time of both robots is 3, and $\mIndexBot_1$ has to rotate by 90$^{\circ}$. Therefore, we get the value of $f(\mIndexState_0)$ $6.25$. There, the time stamps of $\mIndexBot_1$ and $\mIndexBot_2$ are equal to 0, so we can select one of them randomly. In this example we firstly choose $\mIndexBot_1$, which has three possible actions, moving to $\mIndexWaypoint_5$ ($\mIndexState_1$) or $\mIndexWaypoint_4$ ($\mIndexState_2$) or $\mIndexWaypoint_5$ ($\mIndexState_3$). However, $\mIndexBot_1$ cannot reach $\mIndexWaypoint_2$ through the final reservation. Also, the cost of $\mIndexState_2$ is cheaper than $\mIndexState_1$. Thus, $\mIndexState_2$ is selected as the expanded state. In this state, $\mVarTimeStampeOD{\mIndexBot_2}{\mIndexState_2}$ is equal to 0, therefore the actions for $\mIndexBot_2$ will be selected. For this robot, the reservation of the move action on $\mIndexWaypoint_3$ is $[0,2]$ and the final reservation on $\mIndexWaypoint_4$ is $[0,\infty)$. Thus, $\mIndexBot_2$ can either move to $\mIndexWaypoint_1$ or wait. In this example, we can see the influence of the length of the wait action on the solution. If the length is too small, then there are too many states and the second stopping criterion is reached without generating a good solution. On the contrary, the expansion of state $\mIndexState_1$ can happen.

\begin{figure}[h]
	\centering
	\includegraphics[width=0.75\textwidth]{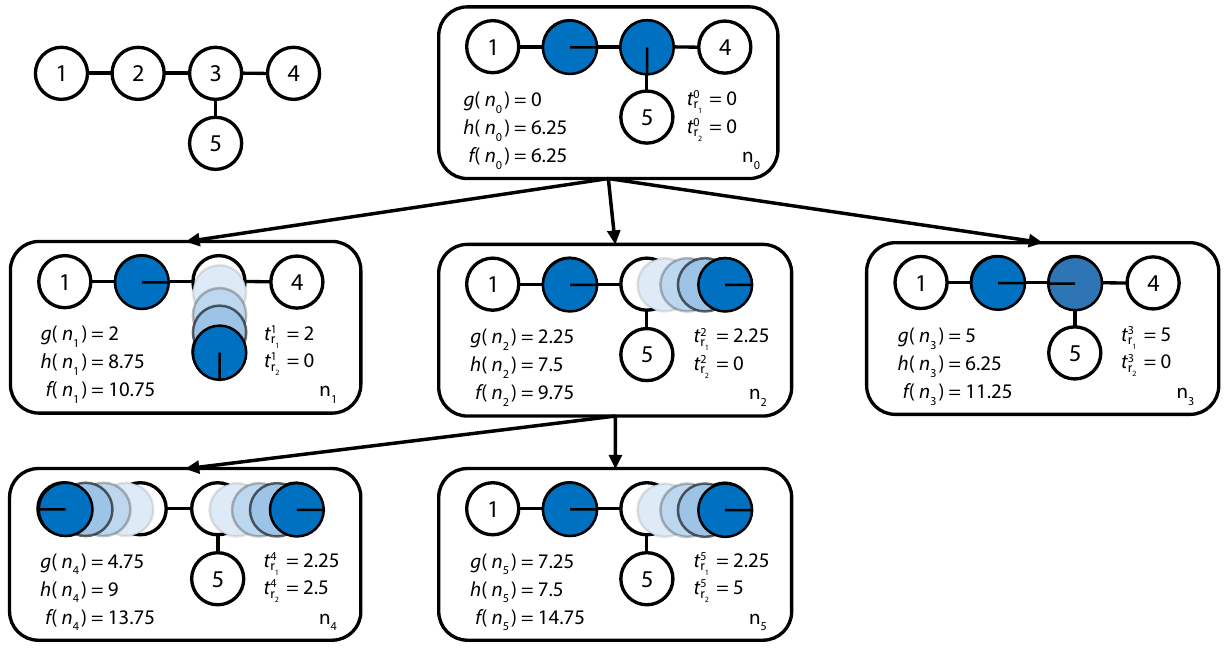}
	\caption{The search space of A* with OD for the case where a robot $\mIndexBot_1$ begins with $\mIndexWaypoint_3$ and ends with $\mIndexWaypoint_1$, while a robot $\mIndexBot_2$ begins with $\mIndexWaypoint_2$ and ends with $\mIndexWaypoint_4$}
	\label{fig:pp_ODID}
\end{figure}
\subsubsection{Independence detection (ID)}
The runtime of OD increases exponentially with the number of robots (see \cite{Standley.2010}). Therefore, Standley introduced ID, which considers disjoint subsets of robots. This means the paths can be planned for each subset of robots independently. Initially, each robot is considered as its own group and for each robot a path is generated with OD. In each iteration, the generated paths of all groups are tested to determine whether they are free of collision. If a collision occurs between two groups, these two groups are merged into one group. Then, new paths are generated for the new groups. This repeats until either all groups can be combined without any collisions or there is only one group remaining.

\subsection{CBS}
The algorithm CBS is introduced by \cite{Sharon.2015}, and uses the constraint tree to dissolve conflicts gradually. Each state in the constraint tree has three properties: one constraint, a solution including a set of paths for all robots and the cost. CBS is a meta-algorithm, which uses the components of a path-planning algorithm to find paths in three dimensional search space, while some nodes are blocked for a given time. Therefore, the algorithm $A^*_{ST}$ is used as in WHCA*. The initial state of the CBS is that a path is found for each robot. There, no restrictions about the conflicts, i.e. collisions, are considered. The corresponding cost of such a state is the sum of costs of all paths of all robots. A state is considered as generated, if a path for each robot is found. Such generated states will be selected for expansion, if they contain conflicts between their paths. I.e., a conflict occurs for a path, if the reservation of a robot $\mIndexBot_1$ for $\mIndexWaypoint$ overlaps with that of a robot $\mIndexBot_2$ for the same node. A new interval for $\mIndexWaypoint$ is calculated based on the maximum ending time and the minimum beginning time of both reservations. Now, two successor states are generated and restricted using this interval. In the first, a new path is searched for robot $\mIndexBot_1$, which is free of overlaps with the new calculated interval. Then, the same is done for robot $\mIndexBot_2$.

It is efficient that the paths and their corresponding costs are calculated once for the initial state, since only the cost of a new path for a robot is updated in each following state, and the paths of other robots remain. The cost of a state is calculated based on a $\delta$-evaluation. And only the new restriction is stored to each following state. In order to hold all restrictions for a state, the tree is traversed from the parent states to the initial state. The same is done to get the path of a state. Recall that, a path for each robot is found in the initial state.

The CBS method is similar to the branch-and-bound algorithm, which includes some selection strategies (see \cite{Land.1960}). The best-first strategy was selected in \cite{Sharon.2015}. Additionally, we apply breadth-first and depth-first strategies to find feasible solutions faster, because MAPFWR has to be solved in a real-time environment. 

Moreover, we use different stop criterium for MAPFWR. The original CBS algorithm stops, if a collision-free solution is found. A solution for MAPF is optimal if an optimal path-planning solver is used and the best-first search is applied as well. However, such a statement is no longer applicable for the CBS algorithm used to solve our problem, since the restrictions include an arbitrary limit to the length of reservations. Moreover, an early termination is required for the application in a real-time environment, because an optimal solution is rarely found for a large number of potential collisions. Therefore, the runtime of this algorithm is limited. It is possible that no feasible solution is found before termination. In this case, we return the solution with the longest time interval until the first collision. Right before the collision would occur CBS is called again to determine a new solution.

\subsection{Resolving deadlocks} \label{subsec:pp_resolvedeadlock}
All the path-planning algorithms we have developed have a runtime limit and timeout is required between two executions. Therefore, it is possible that no path can be found or the path has only wait actions. Moreover, most of the algorithms above, except $FAR_e$, are deterministic, thus, these results are repeated at each call, which might cause an interruption of the system. In order to resolve deadlocks, we keep track of the time each robot approached its current node. If the robot does not leave the node within a given time, then we randomly choose a neighboring node that is not blocked or reserved. If such a node exists, it is assigned to the robot as its next goal. Moreover, it is possible that the robot has only one node to select and the method produces a path that includes the original node again. In this case, both nodes are blocked, and other robots do not have a chance to go through them. Therefore, a random waiting time is required, which is evenly distributed in $[0,\mParTimeWaiting]$. Thereby, the robots, which stand closely to each other, can move apart.

	
	\section{Simulation study}\label{sec:pp_Simulation}
	In this section we first describe our simulation framework while also defining parameters of the experimental setup. Furthermore, we discuss the computational results and compare the different applied methods.

\subsection{Simulation framework}
We use an event-driven agent-based simulation to capture the behavior of RMFS. The 2D- und 3D-visualizations of the simulation are shown on the left and right sides of Fig. \ref{fig:pp_screenshots} respectively, where 3D-visualization is shown for two tiers. The details of the layout will be described later in this section. To investigate the effectiveness in conjunction with the path planning methods described in this work we fix the methods for the other decision problems to the following simple policies. This means that orders are assigned to pick stations randomly (\textit{order assignment}), bundles are assigned to replenishment stations randomly (\textit{replenishment assignment}), bundles are assigned to pods randomly (\textit{bundle storage assignment}), pods are send to random free storage locations (\textit{pod storage assignment}), pods are selected for picking by the number of requests that can be completed with them (\textit{pick pod selection}) and the robots work for all stations equally, but with a preference for pick stations (\textit{task allocation}). As mentioned in Section \ref{subsec:pp_RMFS}, all of these decisions ultimately result in simple requests for the robots to complete; e.g., an order present at a station results in a demand for a certain item. This requires a suitable pod to be brought to this station. Thus, a path from its position to the pod and further on to the station needs to be generated for the robot executing the task.

\begin{figure}[ht]
	\centering
	\begin{subfigure}[b]{0.5\textwidth}
		\centering
		\includegraphics[width = \textwidth]{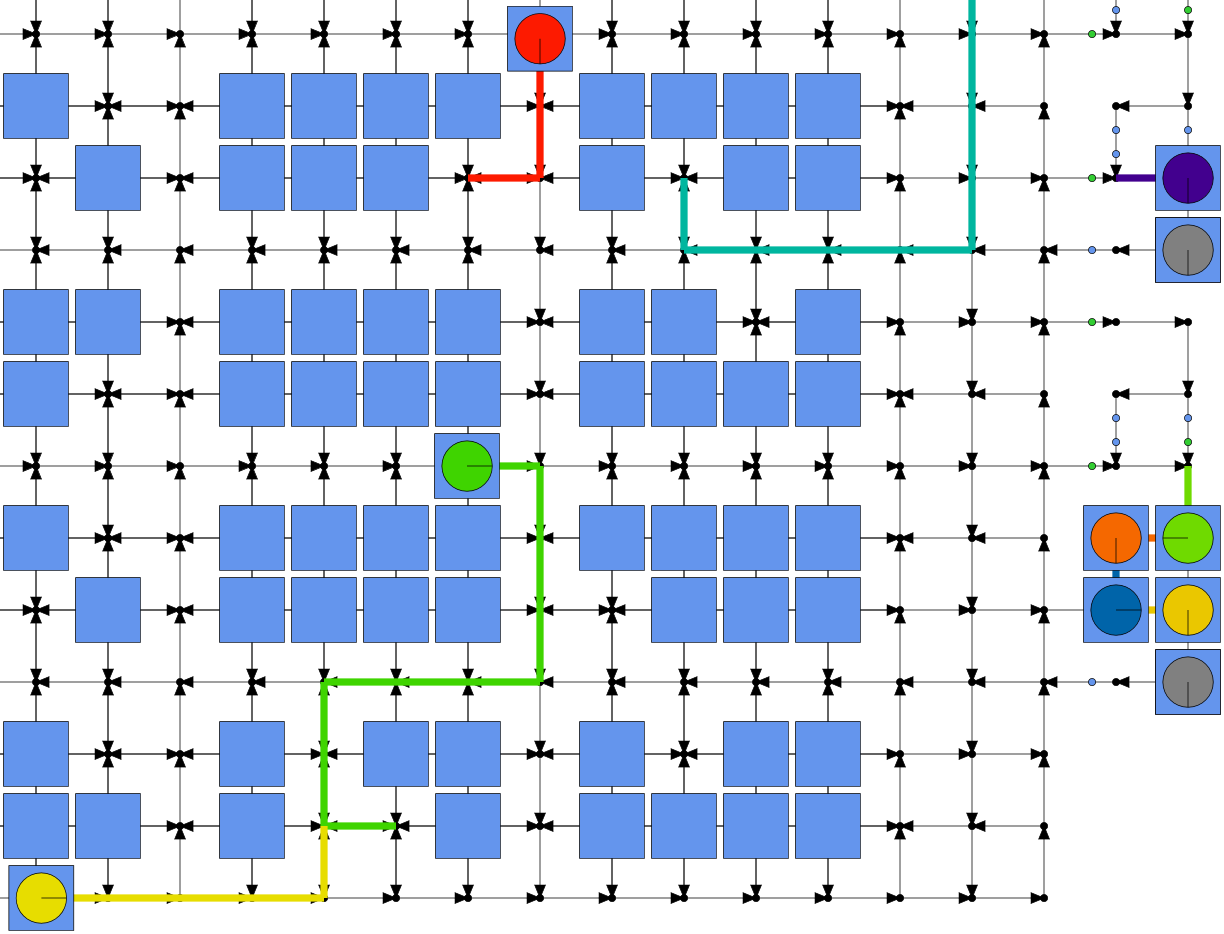}
		\caption{Detailed 2D view}
		\label{fig:pp_simulationscreenshot2D}
	\end{subfigure}
	~
	\begin{subfigure}[b]{0.46\textwidth}
		\centering
		\includegraphics[width = \textwidth]{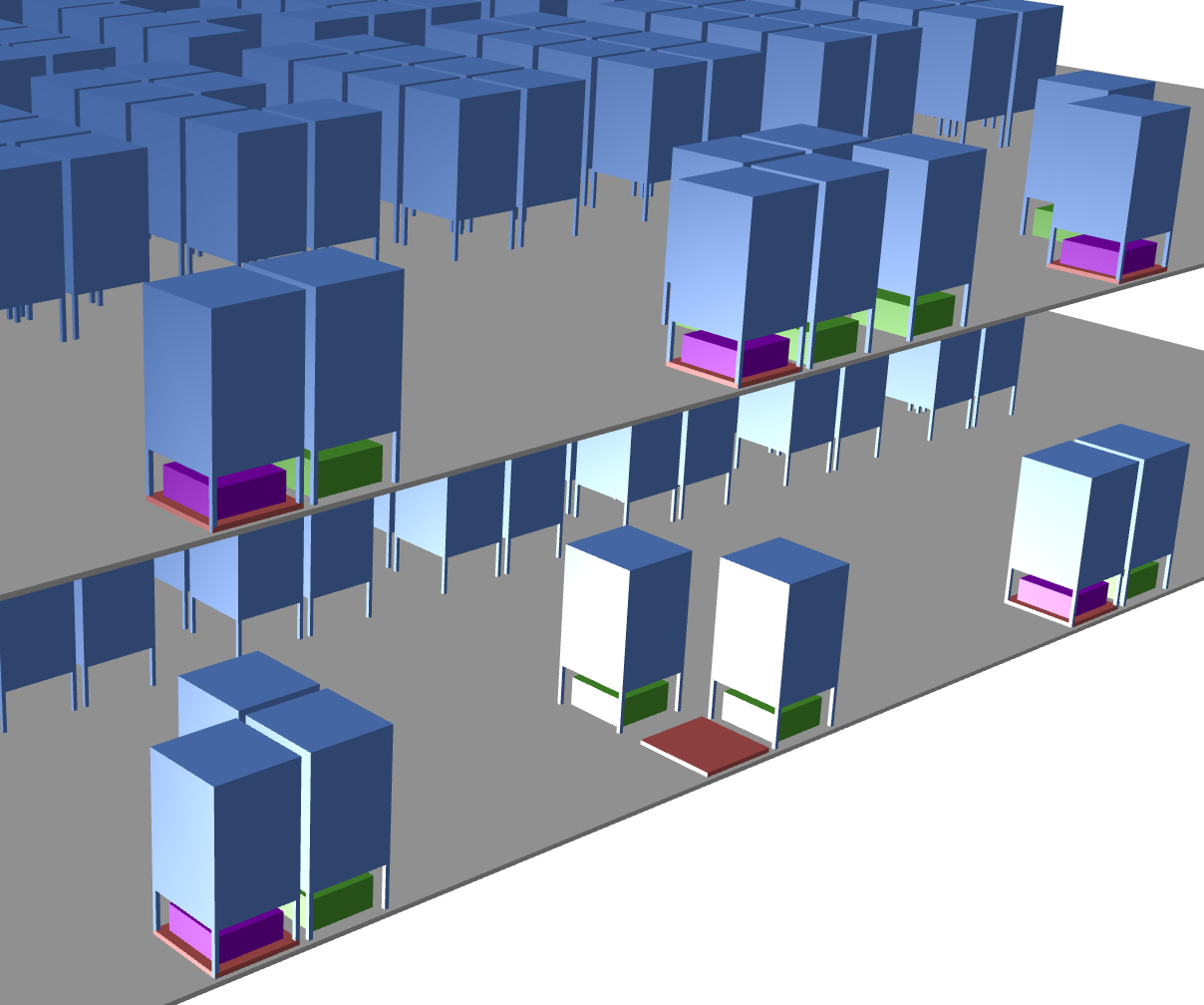}
		\caption{3D overview}
		\label{fig:pp_simulationscreenshot3D}
	\end{subfigure}
	\caption{Screenshots of the simulation visualization}
	\label{fig:pp_screenshots}
\end{figure}

Due to the focus on path planning a backlog of orders of constant length is available at all times, i.e., a new customer order is generated as soon as another one is finished. Hence, the system is being kept under pressure and robots may always have a task to execute. The same is done for generating new bundles of items such that the inventory does not deplete. The number of SKUs is also kept low (100 SKUs) to reduce the risk of stock-outs and to maintain a more stable system during the simulation horizon.

For each of the methods mentioned above a path planning engine is implemented as a wrapper that acts like an agent of the simulation. It is responsible for passing necessary information about the current state of simulation to the methods and coordinates the calls to the planning algorithms. In the update routine of the path planning agent (see Alg. \ref{alg:pp_pathplanneragent}) first the reservation table is reorganized to remove past reservations and then the embedded path planning algorithm is executed. This is only done, if there is no ongoing timeout and there is at least one robot requesting a new path. A robot will request a new path, if it is assigned to a new task with a new destination or the execution of its former path failed. Like mentioned before, the execution of a path may fail, if paths that are not collision-free are assigned to the robots. In this case the path planning engine will abort the execution of the path to avoid an imminent collision. For this reassurance again the reservation table is used.
\begin{algorithm}[h]
	\SetKwFunction{ExecPP}{ExecutePathPlanner}
	\caption{$\text{\textit{Update}}(\mIndexTime,\mSetBots,\mVarResTab,\mParPathPlanningTimeout)$}
	\label{alg:pp_pathplanneragent}
	\DontPrintSemicolon
	\LinesNumbered
	$\mReorganizeReservationTable{\mVarResTab}$\\
	\If{$\left( \mIndexTime' + \mParPathPlanningTimeout < \mIndexTime \right) \wedge \left( \exists \mIndexBot \in \mSetBots : \pi(\mIndexBot) = \emptyset \right)$}{
		\ExecPP{}\\
		$\mIndexTime' \leftarrow \mIndexTime$\\
	}
\end{algorithm}

\subsubsection{Layout} \label{subsubsec:pp_Layout}
The layout of the instances used for the experiment are built by a generator based on the work of \cite{Lamballais.2016}. Mainly three different areas can be identified within the layout (see Fig. \ref{fig:pp_instancelayout}). First, an inventory area is built, which contains all pod storage locations $\mSetBucketParkingSpots$ (indicated by blue squares). These are created in blocks of eight waypoints and connected by other waypoints by bidirectional edges. The waypoints of the aisles in between are connected by directional edges such that a cycle emerges around each block up to the complete inventory area. The directions of the edges are denoted by arrows. Hence, a robot, which is not carrying a pod, can almost freely navigate below stored pods, while a robot carrying a pod must adhere to a certain cyclic flow. This area is surrounded by a hall-area (long dashes) that serves as a highway for robots traveling from storage locations to stations. That offers some space to reduce congestion effects in front of the stations. The last area (short dashes) is used for buffering robots inbound for the stations. This area also contains all replenishment (yellow circles) and pick stations (red-circles). Each station has its own queue that is managed by a queue manager instead of path planning; i.e., as soon as a robot enters this area path planning gets deactivated and instead a queue manager is responsible for moving up robots towards the station while also exploiting shortcuts, if possible.

\begin{figure}[ht]
	\centering
	\includegraphics[width=0.95\textwidth]{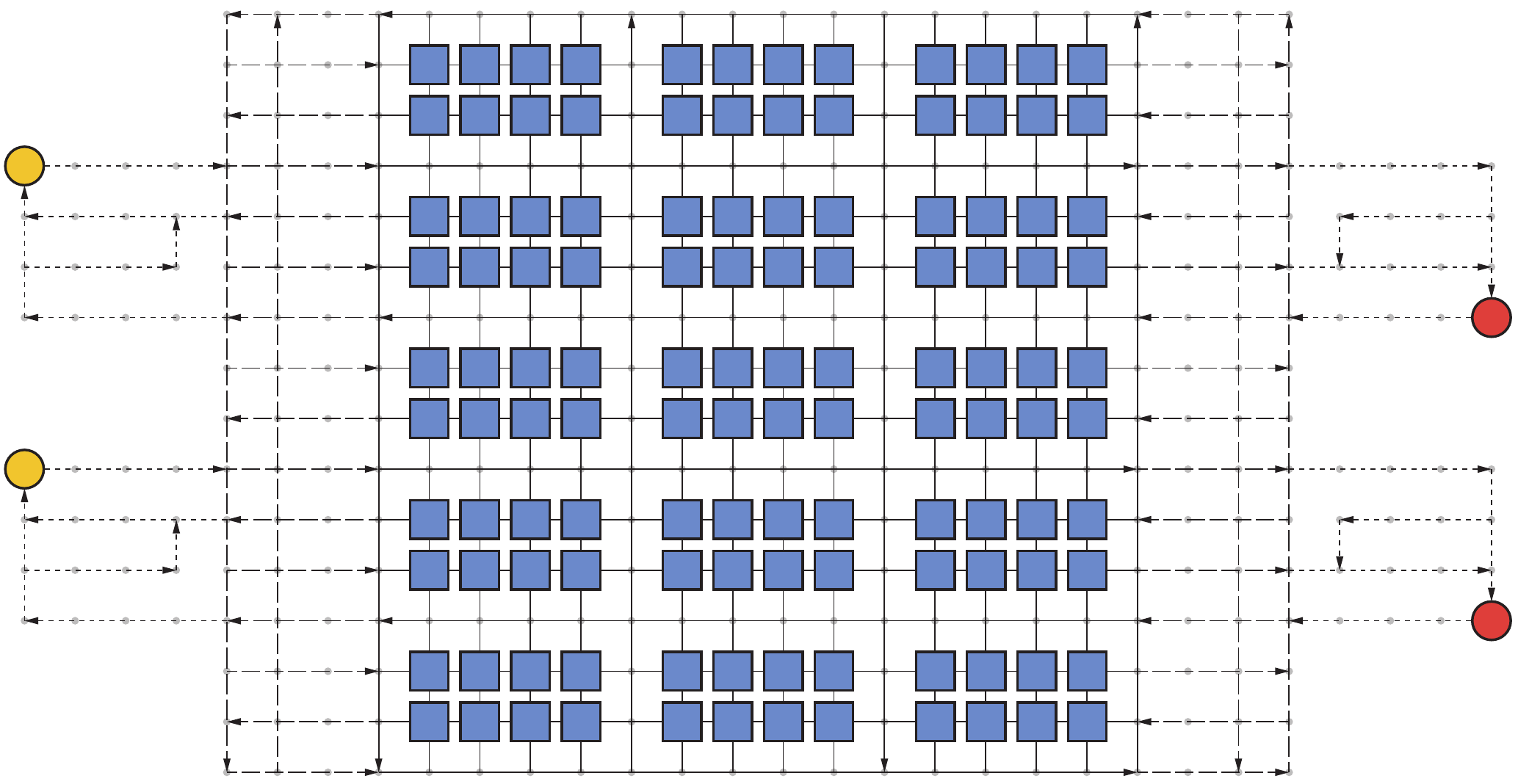}
	\caption{Basic layout of instances built by the generator}
	\label{fig:pp_instancelayout}
\end{figure}

\subsection{Experiment setup}

For the experiment we use 10 instances with different layouts (see Table \ref{tab:pp_instance-overview}). The names of the instances are derived from the number of tiers, replenishment stations, pick stations, robots and pods. All of them adhere to the layout described above, but with minor modifications. The instance 1-12-20-128-1965 uses a layout that surrounds the inventory area with the hall and buffer areas leading to more stations compared to the storage locations. Furthermore, instances 2-8-8-64-1100 and 3-12-12-96-1650 contain two, respectively, three floors connected by elevators. The elevators are positioned similar to the stations and transport one robot at a time from one floor to the next one in 10 seconds. Instance 1-6-14-106-1909 is shaped like a 'L' while 1-6-16-146-2726 contains 'holes' emulating obstacles of a warehouse building structure in its waypoint graph. The instances 1-4-16-144-1951 and 1-1-3-48-795 have all their stations positioned at only one side. The other instances use a form of the default layout described above in different sizes. For all instances roughly 85\% pods are used when compared to available storage locations. This allows for more options when dynamically determining a storage location each time a pod is brought back to the inventory. Additionally, we provide the ratio of robots per station. This is useful as a first intuition for the potential of congestion effects, i.e.: the more robots are used in less space the more conflicts may occur. Furthermore the number of waypoints in the graph is shown. Note that these are all waypoints of the system, including the storage locations shown separately as well as other special purpose waypoints (e.g. the ones used for stations and their queues).

\newcommand{\specialcell}[2][c]{%
  \begin{tabular}[#1]{@{}l@{}}#2\end{tabular}}
\begin{table}[ht]
	\setlength\tabcolsep{1mm}
	\caption{Characteristics of the instances used in the experiment}
	\begin{tabular}{lllllllll}\label{tab:pp_instance-overview}
		Name & tiers & bots & \specialcell{bots \\ per \\ station} & pods & \specialcell{storage \\ locations} & \specialcell{way- \\ points} & \specialcell{repl. \\ stations} & \specialcell{pick \\ stations} \\
		\hline
		1-1-3-48-795 & 1 & 48 & 12.0 & 795 & 936 & 2112 & 1 & 3 \\
		1-4-4-32-550 &  1 & 32 & 4.0 & 550 & 648 & 1640 & 4 & 4 \\
		1-4-16-144-1951 & 1 & 144 & 7.2 & 1951 & 2296 & 5528 & 4 & 16 \\
		1-6-14-106-1909 & 1 & 106 & 5.3 & 1909 & 2248 & 5654 & 6 & 14 \\
		1-6-16-146-2726 & 1 & 146 & 6.6 & 2726 & 3208 & 8120 & 6 & 16 \\
		1-8-8-64-1040 & 1 & 64 & 4.0 & 1040 & 1224 & 3064 & 8 & 8 \\
		1-8-8-96-1502 & 1 & 96 & 6.0 & 1502 & 1768 & 4104 & 8 & 8 \\
		1-12-20-128-1965 & 1 & 128 & 4.0 & 1965 & 2312 & 6088 & 12 & 20 \\
		2-8-8-64-1100 & 2 & 64 & 4.0 & 1100 & 1296 & 4144 & 8 & 8 \\
		3-12-12-96-1650 & 3 & 96 & 4.0 & 1650 & 1944 & 6216 & 12 & 12 \\
	\end{tabular}
\end{table}

All robots of the experiment share the same acceleration and deceleration rate of $0.5 \frac{m}{s^2}$ and a top-speed of $1.5 \frac{m}{s}$. The time needed for a full rotation is set to $2.5 s$. Robots and pods are emulated as moving circles with a diameter of $70 cm$, respectively $90 cm$. The times for picking up a pod, setting down a pod, storing a bundle of items ($\mParTimeIStationHandleUnitIndexed$) and picking a single item ($\mParTimeOStationHandleUnitIndexed$) are all constant and set to $3 s$, $3 s$, $10 s$ and $10 s$, respectively. The following default parameters are used for the different methods, which are chosen according to the results of a preceding grid search on a small subset of possible parameter values. For all methods the length of a wait step is set to $2 s$ while the timeout for a single path planning execution is set to $1 s$, i.e., the path planning algorithm is called at most once per second. For $\text{WHCA}^*_v$ and $\text{WHCA}^*_n$ the time window is set to $20 s$ and  $30 s$, respectively. For $\text{BCP}$ the biased cost is set to $1$. The search method used for $\text{CBS}$ is best first. The maximal node count for $\text{OD\&ID}$ is set to $100$.

For the assessment of performance we first look at the sum of item bundles stored and units picked at the replenishment and pick stations (handled units). This metric also resembles the work that is done by the system during simulation horizon, which relates to the throughput being a typical goal for such a parts-to-picker system. Hence, the implied goal for path planning is to generate paths that can be executed very fast such that the stations wait for robots bringing pods as little as possible. For more detailed insights we added the average length of the computed paths and the average time it took the robots to complete them (trip length \& trip time). At last we look at the wall-clock time consumed by the different methods (wall time).

\subsection{Simulation results}

In the following we discuss the computational results of the experiment described above. Each combination of method and instance is simulated for 24 hours with 10 repetitions to lessen the effect of randomness caused by other controllers and simulation components. In table \ref{tab:pp_results-main}, the arithmetic mean of the proposed metrics is given per method and across all instances. Additionally, the timeout of $1 s$ per path planning execution implies that the overall wall-clock time usable by a method is limited by the simulation horizon of $86.400 s$. In table \ref{tab:pp_results-handledunits-detailed} the handled units of the methods in average per instance are shown. This is done to allow further insights about the performance of the method related to the instance characteristics.

\begin{table}[h]
	\caption{Performance results for the different methods and metrics (averages across repetitions and instances, ordered by handled units)}
	\begin{tabular}{l|rrrr}\label{tab:pp_results-main}
		Method & Handled units & Trip length (m) & Trip time (s) & Wall time (s) \\
		\hline
		$\text{WHCA}^*_v$ & 81011.43 & 52.76 & 67.61 & 9555.28 \\
		$\text{BCP}$ & 80469.76 & 53.69 & 69.53 & 81027.86 \\
		$\text{FAR}_r$ & 78260.82 & 54.17 & 71.07 & 1272.06 \\
		$\text{WHCA}^*_n$ & 77326.95 & 53.58 & 71.27 & 2184.29 \\
		$\text{OD\&ID}$ & 75380.12 & 52.48 & 72.94 & 14780.25 \\
		$\text{FAR}_e$ & 71942.03 & 54.91 & 77.51 & 473.72 \\
		$\text{CBS}$ & 60205.38 & 53.31 & 90.21 & 65560.65 \\
	\end{tabular}
\end{table}

\begin{table}[h]
	\setlength\tabcolsep{0.9mm}
	\caption{Handled units per instance and method (averages across repetitions, green $\equiv$ best / red $\equiv$ worst per row)}
	\begin{tabular}{l|rrrrrrr}\label{tab:pp_results-handledunits-detailed}
		Instance & $\text{WHCA}^*_v$ & $\text{BCP}$ & $\text{FAR}_r$ & $\text{WHCA}^*_n$ & $\text{OD\&ID}$ & $\text{FAR}_e$ & $\text{CBS}$ \\
		\hline
		1-1-3-48-795 & \cellcolor{mediumgreen!18!mediumyellow!80!white}30332 & \cellcolor{mediumyellow!99!mediumred!80!white}30186 & \cellcolor{mediumgreen!100!mediumyellow!80!white}30955 & \cellcolor{mediumgreen!21!mediumyellow!80!white}30359 & \cellcolor{mediumyellow!27!mediumred!80!white}29641 & \cellcolor{mediumgreen!75!mediumyellow!80!white}30763 & \cellcolor{mediumyellow!0!mediumred!80!white}29437 \\
		1-4-4-32-550 & \cellcolor{mediumgreen!100!mediumyellow!80!white}29820 & \cellcolor{mediumgreen!12!mediumyellow!80!white}28787 & \cellcolor{mediumyellow!64!mediumred!80!white}28230 & \cellcolor{mediumgreen!16!mediumyellow!80!white}28835 & \cellcolor{mediumyellow!75!mediumred!80!white}28357 & \cellcolor{mediumyellow!0!mediumred!80!white}27481 & \cellcolor{mediumgreen!87!mediumyellow!80!white}29669 \\
		1-4-16-144-1951 & \cellcolor{mediumgreen!100!mediumyellow!80!white}142482 & \cellcolor{mediumgreen!94!mediumyellow!80!white}139722 & \cellcolor{mediumgreen!88!mediumyellow!80!white}136946 & \cellcolor{mediumgreen!60!mediumyellow!80!white}124404 & \cellcolor{mediumgreen!46!mediumyellow!80!white}118436 & \cellcolor{mediumyellow!93!mediumred!80!white}94683 & \cellcolor{mediumyellow!0!mediumred!80!white}53156 \\
		1-6-14-106-1909 & \cellcolor{mediumgreen!100!mediumyellow!80!white}126878 & \cellcolor{mediumgreen!88!mediumyellow!80!white}124571 & \cellcolor{mediumgreen!88!mediumyellow!80!white}124391 & \cellcolor{mediumgreen!88!mediumyellow!80!white}124412 & \cellcolor{mediumgreen!77!mediumyellow!80!white}122263 & \cellcolor{mediumgreen!74!mediumyellow!80!white}121788 & \cellcolor{mediumyellow!0!mediumred!80!white}86965 \\
		1-6-16-146-2726 & \cellcolor{mediumgreen!100!mediumyellow!80!white}149142 & \cellcolor{mediumgreen!83!mediumyellow!80!white}146119 & \cellcolor{mediumgreen!93!mediumyellow!80!white}147903 & \cellcolor{mediumgreen!89!mediumyellow!80!white}147188 & \cellcolor{mediumgreen!75!mediumyellow!80!white}144540 & \cellcolor{mediumgreen!82!mediumyellow!80!white}145789 & \cellcolor{mediumyellow!0!mediumred!80!white}112812 \\
		1-8-8-64-1040 & \cellcolor{mediumgreen!100!mediumyellow!80!white}50313 & \cellcolor{mediumgreen!59!mediumyellow!80!white}49174 & \cellcolor{mediumyellow!97!mediumred!80!white}47456 & \cellcolor{mediumgreen!9!mediumyellow!80!white}47767 & \cellcolor{mediumyellow!92!mediumred!80!white}47299 & \cellcolor{mediumyellow!0!mediumred!80!white}44742 & \cellcolor{mediumgreen!29!mediumyellow!80!white}48334 \\
		1-8-8-96-1502 & \cellcolor{mediumgreen!100!mediumyellow!80!white}62120 & \cellcolor{mediumgreen!98!mediumyellow!80!white}61939 & \cellcolor{mediumgreen!45!mediumyellow!80!white}57862 & \cellcolor{mediumgreen!56!mediumyellow!80!white}58693 & \cellcolor{mediumgreen!32!mediumyellow!80!white}56857 & \cellcolor{mediumyellow!91!mediumred!80!white}53700 & \cellcolor{mediumyellow!0!mediumred!80!white}46633 \\
		1-12-20-128-1965 & \cellcolor{mediumgreen!100!mediumyellow!80!white}85463 & \cellcolor{mediumgreen!72!mediumyellow!80!white}82024 & \cellcolor{mediumgreen!60!mediumyellow!80!white}80497 & \cellcolor{mediumgreen!63!mediumyellow!80!white}80870 & \cellcolor{mediumgreen!37!mediumyellow!80!white}77644 & \cellcolor{mediumgreen!14!mediumyellow!80!white}74796 & \cellcolor{mediumyellow!0!mediumred!80!white}60705 \\
		2-8-8-64-1100 & \cellcolor{mediumgreen!100!mediumyellow!80!white}54383 & \cellcolor{mediumgreen!25!mediumyellow!80!white}53026 & \cellcolor{mediumyellow!66!mediumred!80!white}51944 & \cellcolor{mediumgreen!19!mediumyellow!80!white}52907 & \cellcolor{mediumyellow!70!mediumred!80!white}52021 & \cellcolor{mediumyellow!0!mediumred!80!white}50756 & \cellcolor{mediumgreen!89!mediumyellow!80!white}54188 \\
		3-12-12-96-1650 & \cellcolor{mediumgreen!100!mediumyellow!80!white}79183 & \cellcolor{mediumgreen!89!mediumyellow!80!white}78940 & \cellcolor{mediumyellow!71!mediumred!80!white}76425 & \cellcolor{mediumgreen!37!mediumyellow!80!white}77836 & \cellcolor{mediumyellow!85!mediumred!80!white}76744 & \cellcolor{mediumyellow!0!mediumred!80!white}74923 & \cellcolor{mediumgreen!90!mediumyellow!80!white}78968 \\
	\end{tabular}
\end{table}

In terms of handled units, $\text{WHCA}^*_v$ is the most successful one, followed by $\text{BCP}$. We can also see that the trip time highly relates to the number of handled units, i.e. the shorter the time is for completing a trip the more units are handled overall. This does not hold for the trip length, i.e. the length of the trips lies in a close range, but a shorter one is not necessarily faster. This is mainly impacted by the more important coordination of the robots. Thus, shorter trips may cause more congestion and longer waiting times for the robot while it is executing a path. The wall time consumed by the methods differs significantly. While $\text{FAR}_e$ in average uses less than ten minutes to plan the paths for all robots for 24 hours, going from 85.57s for the 1-4-4-32-550 layout up to 1008.59s for the 1-6-16-146-2726 layout. In contrast, BCP almost always uses the complete allowed runtime. This means that BCP does plan reasonably efficient paths, but is not able to completely resolve all conflicts up until the destination of all of them. This can also be seen when looking at the percentage of executions that ended in a timeout (see Tab. \ref{tab:pp_results-further-insights}). It is almost impossible for BCP to generate completely conflict-free paths for the layout instances of quite realistic size before the timeout of $1 s$. We can only observe this for the smallest instances of the set. However, the paths successfully planned by BCP until the timeout takes effect are competitive. The FAR methods cause the longest trip lengths, but the $\text{FAR}_r$ variant can still compete with the others in terms of handled units and trip time. This is especially interesting when looking at the wall time consumed by it. Hence, the FAR method is a candidate to consider when controlling instances much larger than the ones considered in this work. Furthermore, the strategy of FAR to avoid head to head collisions is working well (in comparison) for instances that are more crowded with robots, i.e. have a higher robot to station ratio (see Tables \ref{tab:pp_results-handledunits-detailed}). The rather poor performance of CBS is a reason of the method not being able to generate efficient paths within the time limit for the quite large instances of the pool. This even causes cascading congestion effects, if a robot is not assigned any path and will block others even longer. In contrast, we see a good performance of CBS for the smaller instances with a lower robot to station ratio (see Tab. \ref{tab:pp_results-handledunits-detailed}). The $\text{WHCA}^*_n$ variant is still performing well while only consuming roughly 23 \% wall-clock time of $\text{WHCA}^*_v$. However, a longer trip time is the result of the robots being forced to plan their trips based on the existing ones without the possibility to find overall improved paths. The method $\text{OD\&ID}$ achieves reasonable performance while consuming acceptable wall-clock time across all instances.


\begin{table}[h]
	\caption{Extended method comparison}
	\centering
	\begin{tabular}{l|rrrr}\label{tab:pp_results-further-insights}
		Method & Station idle time & Timeouts & \multicolumn{2}{r}{Memory used (MB)} \\
		 & & & average & maximum \\
		\hline
		$\text{WHCA}^*_v$ & 44.8 \% & 7.0 \% & 120.63 & 227.62 \\
		$\text{BCP}$ & 46.2 \% & 97.1 \% & 86.01 & 147.76 \\
		$\text{FAR}_r$ & 46.4 \% & 0.0 \% & 105.43 & 193.88 \\
		$\text{WHCA}^*_n$ & 47.0 \% & 0.0 \% & 128.98 & 254.78 \\
		$\text{OD\&ID}$ & 48.2 \% & 0.2 \% & 99.22 & 183.13 \\
		$\text{FAR}_e$ & 50.1 \% & 0.0 \% & 101.26 & 193.11 \\
		$\text{CBS}$ & 56.5 \% & 65.9 \% & 77.00 & 125.60 \\
	\end{tabular}
\end{table}

Furthermore, we can observe that the idle time of the station, i.e. the time the station is not busy picking items, respectively not busy storing bundles, is another metric that is closely related to the trip time of the methods (see Tab. \ref{tab:pp_results-further-insights}). To some extent, this means that stations will idle less, if robots reach them faster. Regarding the fairly high idle times note that the experiment is designed in a way that increases pressure on the path planning components, i.e. the controllers for the other components are causing longer trips and reasonable times for handling pods at the stations. This is done, because there is a natural upper bound for handling items and item bundles at the stations given by the constant time it needs to process one unit of each. Hence, it is not possible to process more units than given by the following simple upper bounds, which would lead to a bottleneck obscuring the impact of the path planning components when reached. For pick stations the time for picking one item limits the throughput per hour ($UB^O_\mIndexStation := \frac{3600}{\mParTimeOStationHandleUnit{\mIndexStation}}$) while for replenishment stations it is limited by the time for storing one item bundle ($UB^I_\mIndexStation := \frac{3600}{\mParTimeIStationHandleUnit{\mIndexStation}}$). Summing up upper bounds of the stations leads to an overall upper bound for handled units for the system ($UB := \sum_{\mIndexStation \in \mSetOStations} UB^O_\mIndexStation + \sum_{\mIndexStation \in \mSetIStations} UB^I_\mIndexStation$).
Like mentioned before some of the methods reach the given runtime timeout much more than others. Looking at the average timeouts of the different methods across all instance we can observe that BCP almost always uses its complete runtime given. However, it still obtains results of reasonable quality. In contrast, CBS also reaches it's runtime very often but is not able to obtain efficient paths within time except for the small instances of the set. For the latter also less timeouts occur for CBS. Except for $\text{WHCA}^*_v$ all other methods virtually never run into a timeout. Interestingly $\text{OD\&ID}$ consumes more wall time than $\text{WHCA}^*_v$ but faces less timeouts. Since the timeouts per instance are very similar for $\text{WHCA}^*_v$ this suggests that certain states during the simulation horizon cause spikes in the wall time consumed that lead to these timeouts.
At last, we show the maximal memory consumed by the different method. This is the maximum across all instances. It is obtained by executing a reference simulation for each instance applying a random walk method and subtracting the resulting memory consumption from the memory consumed by the respective method for the same instance. Note that the measurement of the memory underlies inaccuracy caused by technical influences like the garbage collection. However, overall we can see that the memory consumed only differs between the methods in reasonable absolute numbers. Especially when looking at the maximal memory consumption across all instances we can observe that memory is not the limiting factor for the proposed methods considering todays typical hardware.

\begin{figure}[h]
	\centering
	\begin{subfigure}[b]{0.49\textwidth}
		\includegraphics[width=\textwidth]{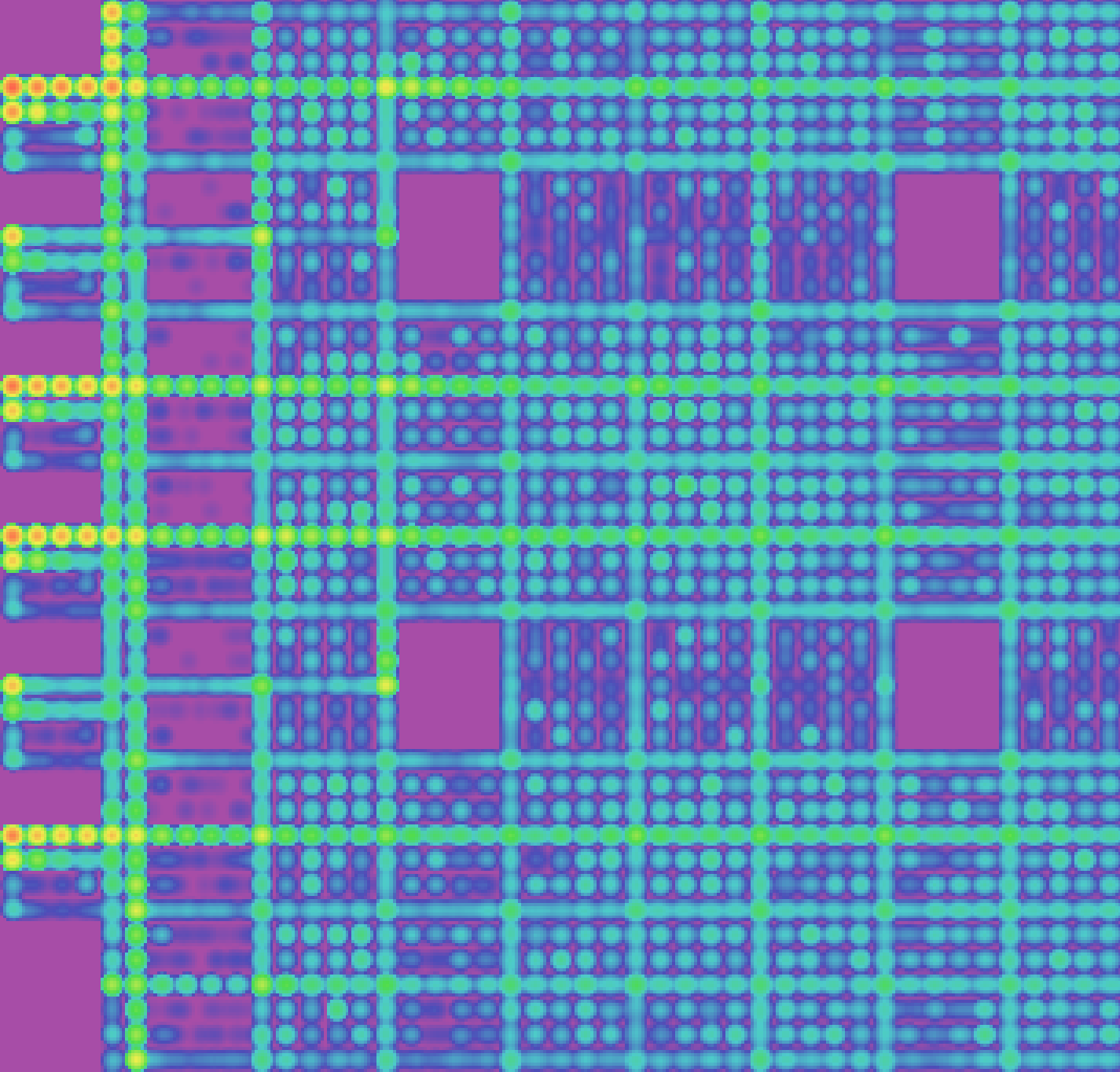}
		\caption{$\text{CBS}$}
	\end{subfigure}
	\begin{subfigure}[b]{0.49\textwidth}
		\includegraphics[width=\textwidth]{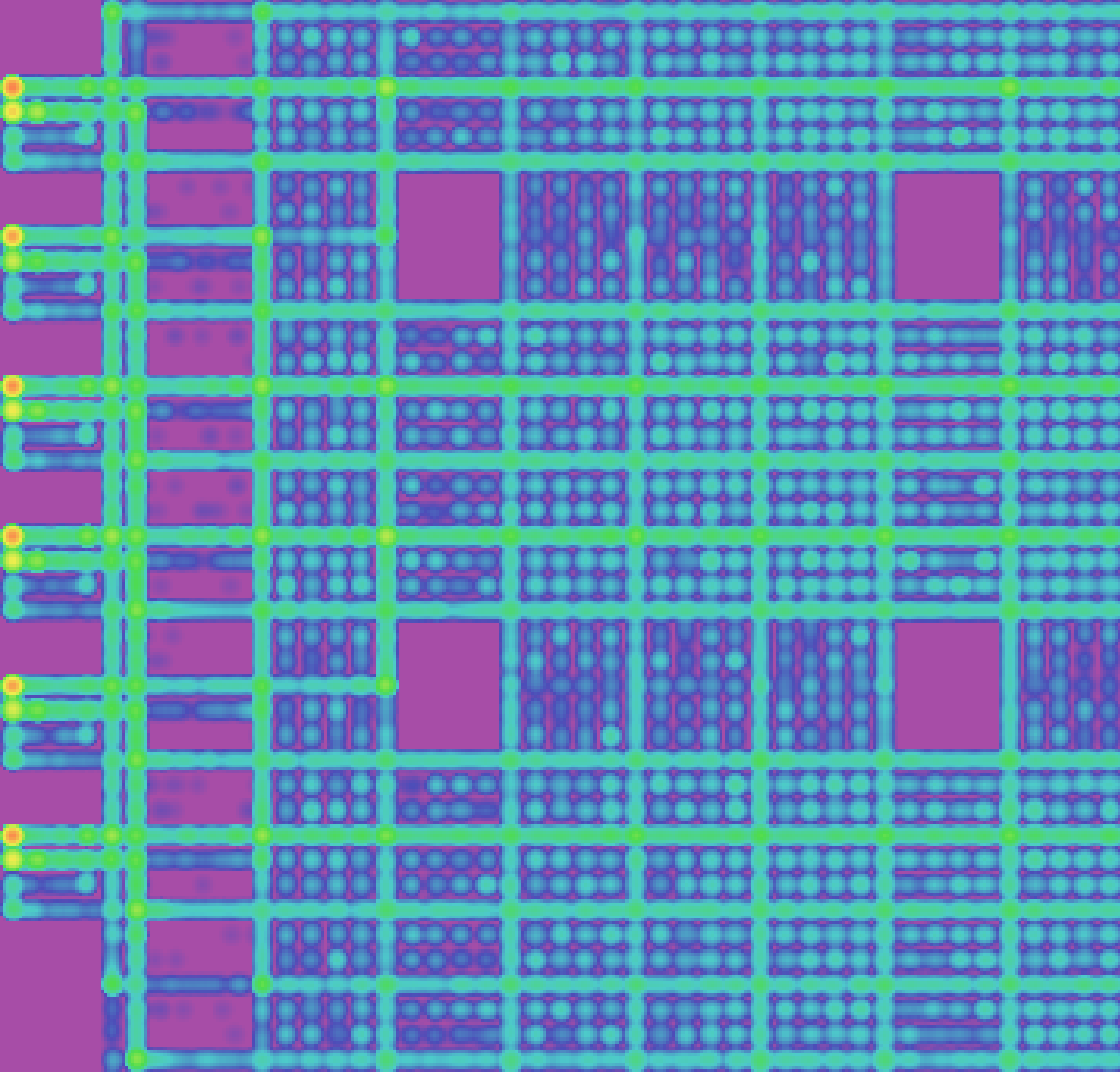}
		\caption{$\text{WHCA}^*_v$}
	\end{subfigure}
	\caption{Heat map of the robot positions over time for a part of instance 1-6-16-146-2726 (purple $\equiv$ low, red $\equiv$ high)}
	\label{fig:pp_heatmaps}
\end{figure}

The impact of path planning on the systems overall performance can also be seen in more detail when comparing the movement of the robots between $\text{WHCA}^*_v$ and $\text{CBS}$ for a large instance by using the heatmaps shown in Fig. \ref{fig:pp_heatmaps}. The heatmaps show the positions of all robots that are periodically polled throughput the simulation horizon. A logarithmic scale from purple as the coldest color to red as the hottest is used to render the values. Hence, warmer colors indicate areas where robots have been observed more frequently. In the case of $\text{CBS}$ the highest conflicting area in terms of congestion are the exits of the stations (stations are positioned all around the inventory area for this instance). When applying $\text{WHCA}^*_v$ to the same instance robots are able to leave the stations quickly and spent more time within the inventory area storing and retrieving pods. This is specifically indicated by less ``hot dots'', which can especially be observed at intersection waypoints. Overall the movement behavior of $\text{WHCA}^*_v$ is more 'fluent' along the main axes from the inventory area towards the stations and back. We can observe similar effects for the other methods and instances. Thus, ensuring a fluent robot movement is an objective of efficient path planning.

	
	\section{Conclusion}
	In this work we proposed a generalized problem definition of the MAPF problem that allows the application of methods for the field of Robotic Mobile Fulfillment Systems. Additionally, we have proven that $A^*$ is complete and admissible in the search space of our problem. Further on, we describe necessary modifications to the $A^*$-based algorithms previously applied to MAPF. Based on the computational results from our simulation framework, $\text{WHCA}^*_v$ performed the best according to the proposed metrics, but does not scale as well as the $\text{FAR}$ methods in terms of wall-clock time. Similarly, the time consumption of $\text{CBS}$ negatively impacted the performance for the large instances in our set. In contrast, $\text{FAR}$ is applicable even for very large instances.

In future we want to investigate the combination of the proposed methods with control mechanisms for the other decision components, such as \textit{task allocation}. At this, we expect that methods have mutual dependencies. Hence, effective methods have to be evaluated that cooperate best to achieve a globally efficient system. Furthermore, we observe the impact of the layout characteristics and size on the performance, which suggests a more detailed investigation of the dependencies of these.

	
	\section*{Acknowledgements}
	We would like to thank Tim Lamballais for providing us with the layout concept used for the instances of the experiments. Additionally, we thank the Paderborn Center for Parallel Computing (PC$^2$) for the use of their HPC systems for conducting the experiments. Marius Merschformann is funded by the International Graduate School - Dynamic Intelligent Systems, Paderborn University.
	
	
	\bibliographystyle{plain}      
	\bibliography{public/bibliography}   
	\newpage
	
	\appendix
	\section{Theorem} \label{sec:pp_Theorem}

\begin{theorem}
Let $c(\mIndexState_1,\mIndexState_2)$ be the cost of the arc $(\mIndexState_1,\mIndexState_2)$ in search space $S$ of MAPFWR. Then it holds a finite and positive lower bound $\delta$, $c(\mIndexState_1,\mIndexState_2) \ge \delta > 0$.
\end{theorem}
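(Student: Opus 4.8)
The plan is to exploit the fact that every arc of the search space $\mSetSearchSpace$ carries one of exactly two kinds of cost — a waiting cost or a moving cost — and to bound each kind below by a strictly positive constant. The desired bound $\delta$ is then simply the minimum of the two, which is finite and positive.

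For a waiting arc the cost is the constant $\mParTimeWaiting$, and since $\mParTimeWaiting \in \mContinuousPositive$ by assumption this immediately provides a strictly positive lower bound for all such arcs. For a moving arc of agent $\mIndexBot$ the cost is $\mGetTimeRotation{\mIndexBot}{\mValAngle}{\mParMaxAngularVelocity{\mIndexBot}} + \mGetTimeDrive{\mIndexBot}{s}{\mParAcceleration{\mIndexBot}}{\mParMaxVelocity{\mIndexBot}}{\mParDeceleration{\mIndexBot}}$, where $s$ denotes the Euclidean distance driven. Because the rotation term is nonnegative, it suffices to bound the drive-time term below by a positive constant.

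The key geometric ingredient is the edge-length requirement of Eq. \ref{eq:pp_edgeLength}: every edge has length at least $d_{\min} := \max_{i,i' \in \mSetBots \cup \mSetBuckets} \left( \mParRadius{i} + \mParRadius{i'} \right)$, which is strictly positive since all radii lie in $\mContinuousPositive$. As any move covers at least one full edge, the driven distance satisfies $s \ge d_{\min} > 0$. I would then appeal to the explicit form of the drive-time function (Eq. \ref{eq:pp_robot-drive-time}) together with its monotonicity in the distance: in both the long- and short-distance cases every summand is strictly positive for a positive argument (each square root is positive, and the terms $\mParMaxVelocity{\mIndexBot}/\mParAcceleration{\mIndexBot}$ and $\mParMaxVelocity{\mIndexBot}/\mParDeceleration{\mIndexBot}$ are positive), so $\mGetTimeDrive{\mIndexBot}{d_{\min}}{\mParAcceleration{\mIndexBot}}{\mParMaxVelocity{\mIndexBot}}{\mParDeceleration{\mIndexBot}} > 0$ for every $\mIndexBot$. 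Setting $\delta_M := \min_{\mIndexBot \in \mSetBots} \mGetTimeDrive{\mIndexBot}{d_{\min}}{\mParAcceleration{\mIndexBot}}{\mParMaxVelocity{\mIndexBot}}{\mParDeceleration{\mIndexBot}}$ and $\delta := \min\left( \mParTimeWaiting, \delta_M \right)$ then establishes the claim.

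The main obstacle is securing \emph{uniformity} of the bound rather than the elementary positivity of any single cost. Since the drive time depends on each robot's own kinematic parameters, a priori the infimum of the drive times over all agents could conceivably be zero. The finiteness of the agent set $\mSetBots$ is exactly what rescues the argument: with finitely many robots the minimum $\delta_M$ is attained and hence strictly positive, so $\delta$ is a genuine positive lower bound and not merely a non-negative infimum that might collapse to zero.
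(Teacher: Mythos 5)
Your decomposition into waiting arcs and moving arcs is the right start, and on the two kinds of arcs you actually treat (waits, and moves that begin from a standstill at a waypoint) your bounds are correct. The gap is that the search space $\mSetSearchSpace$ of MAPFWR contains a third kind of arc that your argument never sees: move arcs along which the robot does \emph{not} stop at the waypoint where the arc begins. These are exactly the ``second solution'' of Section \ref{sec:pp_SearchSpace} (continuous driving through intermediate nodes, with costs obtained by a backward search to the last rotation node, cf.\ Fig.\ \ref{fig:pp_costs}), and the paper's own proof devotes a separate third case to them. For such an arc $(\mIndexState_1,\mIndexState_2)$ crossing the edge $(\mIndexWaypoint_1,\mIndexWaypoint_2)$, with $\mIndexWaypoint_0$ the last stopping node, the cost is the \emph{increment} $\mGetTimeDrive{\mIndexBot}{\mGetDistanceTimeIndependent{\mIndexWaypoint_0}{\mIndexWaypoint_2}}{\mParAcceleration{\mIndexBot}}{\mParMaxVelocity{\mIndexBot}}{\mParDeceleration{\mIndexBot}} - \mGetTimeDrive{\mIndexBot}{\mGetDistanceTimeIndependent{\mIndexWaypoint_0}{\mIndexWaypoint_1}}{\mParAcceleration{\mIndexBot}}{\mParMaxVelocity{\mIndexBot}}{\mParDeceleration{\mIndexBot}}$ (plus a nonnegative rotation term), and \emph{not} a stop-to-stop drive time $\mGetTimeDrive{\mIndexBot}{s}{\mParAcceleration{\mIndexBot}}{\mParMaxVelocity{\mIndexBot}}{\mParDeceleration{\mIndexBot}}$ over the edge. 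Your bound $\delta_M$ is provably not valid on these arcs: a robot already cruising at top speed traverses an edge of length $d_{\min}$ in time $d_{\min}/\mParMaxVelocity{\mIndexBot}$, which is \emph{strictly} smaller than $\mGetTimeDrive{\mIndexBot}{d_{\min}}{\mParAcceleration{\mIndexBot}}{\mParMaxVelocity{\mIndexBot}}{\mParDeceleration{\mIndexBot}}$, because a stop-to-stop drive begins and ends at speed zero and hence has average speed strictly below $\mParMaxVelocity{\mIndexBot}$. So with your choice of $\delta$ the claimed inequality $c(\mIndexState_1,\mIndexState_2) \ge \delta$ fails on precisely the arcs that distinguish MAPFWR's search space from plain MAPF's.

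The repair is the paper's route: use a weaker per-arc bound that survives the incremental case. Since a robot's speed never exceeds $\mParMaxVelocity{\mIndexBot}$, the map $d \mapsto \mGetTimeDrive{\mIndexBot}{d}{\mParAcceleration{\mIndexBot}}{\mParMaxVelocity{\mIndexBot}}{\mParDeceleration{\mIndexBot}}$ increases with slope at least $1/\mParMaxVelocity{\mIndexBot}$, so the difference of drive times above is at least $\mGetDistanceTimeIndependent{\mIndexWaypoint_1}{\mIndexWaypoint_2}/\mParMaxVelocity{\mIndexBot} \ge d_{\min}/\mParMaxVelocity{\mIndexBot}$, and the same distance-over-top-speed bound trivially covers moves from a standstill as well. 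Minimizing over the finite fleet gives $T^R := d_{\min}/\max_{\mIndexBot \in \mSetBots}\mParMaxVelocity{\mIndexBot} > 0$, and $\delta := \min\left(\mParTimeWaiting, T^R\right)$ then bounds every arc in all three cases. Your closing observation that finiteness of $\mSetBots$ is what keeps the minimum positive carries over unchanged; the essential missing idea is the case split on whether the robot stops at $\mIndexWaypoint_1$, i.e., the difference-of-drive-times form of the cost.
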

\begin{proof}
Let $r$ be $argmax_{\mIndexBot \in \mSetBots} \mParMaxVelocity{r}$, $ T^R$ be $\dfrac{\max_{i\in \mSetBots \cup \mSetBuckets} \max_{i'\in \mSetBots \cup \mSetBuckets \setminus \left\{ i \right\}} \mParRadius{i} + \mParRadius{i'}}{\mParMaxVelocity{\mIndexBot}}$ and $\delta$ be $min(\mParTimeWaiting,T^R)$. So $\delta > 0$ since $\mParTimeWaiting \in \mContinuousPositive$, $\forall \mIndexBot \in \mSetBots: \mParRadius{\mIndexBot},\mParMaxVelocity{\mIndexBot} \in \mContinuousPositive$ and $\forall \mIndexBucket \in \mSetBuckets: \mParRadius{\mIndexBucket} \in \mContinuousPositive$. Now we look at the possible actions occurring on the arc $(\mIndexState_1,\mIndexState_2)$.
\begin{enumerate}
	\item If there is a wait action then $c(\mIndexState_1,\mIndexState_2)=\mParTimeWaiting \ge \delta >0$
	\item If there is a move action through the arc $(\mIndexWaypoint_1,\mIndexWaypoint_2)$ and a rotation occurs on $\mIndexWaypoint_1$ with angle $\varphi$ then\\
	$c(\mIndexState_1,\mIndexState_2)=\mGetTimeRotation{\mIndexBot}{\varphi}{\mParMaxAngularVelocity{\mIndexBot}}+\mGetTimeDrive{\mIndexBot}{\mGetDistanceTimeIndependent{\mIndexWaypoint_1}{\mIndexWaypoint_2}}{\mParAcceleration{\mIndexBot}}{\mParMaxVelocity{\mIndexBot}}{\mParDeceleration{\mIndexBot}}$\\
	$\ge \dfrac{\varphi}{\mParMaxAngularVelocity{\mIndexBot}}+\dfrac{\mGetDistanceTimeIndependent{\mIndexWaypoint_1}{\mIndexWaypoint_2}}{\mParMaxVelocity{\mIndexBot}} \ge \dfrac{\mGetDistanceTimeIndependent{\mIndexWaypoint_1}{\mIndexWaypoint_2}}{\mParMaxVelocity{\mIndexBot}} \ge T^R \ge \delta >0$
	\item If there is a move action through the arc $(\mIndexWaypoint_1,\mIndexWaypoint_2)$ and no rotation occurs on $\mIndexWaypoint_1$, moreover, $\mIndexWaypoint_0$ is the last stopping node where a rotation occurs with angle $\varphi$ then\\
	$c(\mIndexState_1,\mIndexState_2)=\mGetTimeRotation{\mIndexBot}{\varphi}{\mParMaxAngularVelocity{\mIndexBot}}+\mGetTimeDrive{\mIndexBot}{\mGetDistanceTimeIndependent{\mIndexWaypoint_0}{\mIndexWaypoint_2}}{\mParAcceleration{\mIndexBot}}{\mParMaxVelocity{\mIndexBot}}{\mParDeceleration{\mIndexBot}}\\
	-\mGetTimeDrive{\mIndexBot}{\mGetDistanceTimeIndependent{\mIndexWaypoint_0}{\mIndexWaypoint_1}}{\mParAcceleration{\mIndexBot}}{\mParMaxVelocity{\mIndexBot}}{\mParDeceleration{\mIndexBot}}$\\
	$\ge \dfrac{\varphi}{\mParMaxAngularVelocity{\mIndexBot}}+\dfrac{\mGetDistanceTimeIndependent{\mIndexWaypoint_1}{\mIndexWaypoint_2}}{\mParMaxVelocity{\mIndexBot}} \ge \dfrac{\mGetDistanceTimeIndependent{\mIndexWaypoint_1}{\mIndexWaypoint_2}}{\mParMaxVelocity{\mIndexBot}} \ge T^R \ge \delta >0$
\end{enumerate}
Thus, $c(\mIndexState_1,\mIndexState_2) \ge \delta >0$.
\end{proof}
			
In the proof, $\delta$ is selected as the smallest cost of the changed state for each robot, which is either the waiting time $\mParTimeWaiting$ or the time $T^R$ for the robot with highest speed to go through the shortest arc. According to Eq. \eqref{eq:pp_edgeLength}, $\max_{i\in \mSetBots \cup \mSetBuckets} \max_{i'\in \mSetBots \cup \mSetBuckets \setminus \left\{ i \right\}} \mParRadius{i}+\mParRadius{i'}$ is the lower bound of the length of $(\mIndexWaypoint1,\mIndexWaypoint2)$. In the first case, there is no action, which is shorter than $\delta$, since $\delta$ is $min(\mParTimeWaiting,T^R)$. In the second case, the time for going through an arc is longer than $T^R$, and likewise for the third case. Note that in the third case, the time for the path from $\mIndexWaypoint1$ to $\mIndexWaypoint2$ is calculated through the path from $\mIndexWaypoint0$ to $\mIndexWaypoint2$ minus the time for the path from $\mIndexWaypoint_0$ to $\mIndexWaypoint_1$. Therefore, $\delta$ is the finite and positive lower bound for all actions in search space $\mSetSearchSpace$.

	
	
\end{document}